\newcommand{\R}{\mathbb{R}}
\newcommand{\x}{\mathbf{x}}
\newtheorem{theorem}{Theorem}[section]
\newtheorem{proposition}[theorem]{Proposition}
\newtheorem{lemma}[theorem]{Lemma}
\newtheorem{assumption}[theorem]{Assumption}
\newtheorem{remark}[theorem]{Remark}
\title{The Challenges of the Nonlinear Regime for Physics-Informed Neural Networks}
\author{%
  Andrea Bonfanti \\
  BMW AG, Digital Campus Munich\\
  Basque Center for Applied Mathematics\\
  University of the Basque Country \\
  \texttt{abonfanti001@ikasle.ehu.eus} \\
  \And
   Giuseppe Bruno \\
   BMW AG, Digital Campus Munich\\
  \texttt{Giuseppe.GB.Bruno@bmw.de}
   \And
   Cristina Cipriani \\
   Technical University of Munich\\
   Munich Center for Machine Learning\\
   Munich Data Science Institute\\
   \texttt{cristina.cipriani@ma.tum.de}
}
\begin{document}

\maketitle

\begin{abstract}
The Neural Tangent Kernel (NTK) viewpoint is widely employed to analyze the training dynamics of overparameterized Physics-Informed Neural Networks (PINNs). However, unlike the case of linear Partial Differential Equations (PDEs), we show how the NTK perspective falls short in the nonlinear scenario. Specifically, we establish that the NTK yields a random matrix at initialization that is not constant during training, contrary to conventional belief. Another significant difference from the linear regime is that, even in the idealistic infinite-width limit, the Hessian does not vanish and hence it cannot be disregarded during training. This motivates the adoption of second-order optimization methods. We explore the convergence guarantees of such methods in both linear and nonlinear cases, addressing challenges such as spectral bias and slow convergence. Every theoretical result is supported by numerical examples with both linear and nonlinear PDEs, and we highlight the benefits of second-order methods in benchmark test cases.
\end{abstract}

\section{Introduction}

PINNs have became ubiquitous in the scientific research community as a meshless and practical alternative tool for solving PDEs. The first attempts to exploit machine learning models for PDE solutions can be traced back to two articles from the 90s \cite{dissanayake1994neural, lagaris1998artificial}, while the model acquired its name and popularity through a later publication \cite{raissi2019physics}. 
Due to the flexible structure of the architecture, PINNs can be used for forward and inverse problems \cite{yang2021b} and efficiently exploited for more complex engineering practice such as constrained shape and topology optimization, and surrogate modeling \cite{sun2022physics, jeong2023complete}. 
However, the usability of PINNs for such applications is often hindered by their slow training and occasional failure to converge to acceptable solutions. Due to the black-box nature of PINNs, it is challenging to analyze their training dynamics and convergence properties mathematically \cite{krishnapriyan2021characterizing}. Nonetheless, rapid training and reliable convergence are crucial aspects of any PDE solver intended for engineering applications. 

\paragraph{Related works.} In this context, the NTK \cite{jacot2018neural} viewpoint has yielded intriguing insights, particularly in the realm of linear PDEs \cite{wang2022and}. 
Although based on the assumption of overparameterized networks, this perspective has proven valuable in highlighting various intrinsic pathologies in PINN training, such as spectral bias \cite{wang2021eigenvector, deshpande2022investigations, rahaman2019spectral}, the complexity of the loss landscape generated by the PDE residuals \cite{krishnapriyan2021characterizing}
and the nuanced interplay among components of the loss function \cite{wang2021understanding}. The salient characteristics of the NTK in the infinite-width limit are the fact that is deterministic at initialization, constant during training, and it linearizes the training dynamics due to the sparsity of the Hessian of PDE residuals \cite{liu2020linearity, liu2020toward}.

\paragraph{Our contributions.} In this paper, we delineate the profound theoretical distinctions between the application of PINNs to linear versus nonlinear PDEs, elucidating the differences in their NTK behavior. We show that, even under the idealistic assumption of the infinite-width limit, the NTK framework fails in the nonlinear domain. Our novel contribution lies in demonstrating that the NTK is stochastic at initialization, it is dynamic during training, and is accompanied by a non-vanishing Hessian. Given the evolution of the Hessian throughout training, we emphasize the need of employing second-order methods for nonlinear PDEs. Furthermore, we analyze their convergence guarantees, revealing that even in linear scenarios, the utilization of second-order methods proves advantageous in mitigating the issue of spectral bias. As a second-order method, we employ Levenberg-Marquardt algorithm, a stabilized version of the well-known Gauss-Newton algorithm, which approximates the Hessian to make it computationally feasible even for large networks. It is important to note that our goal is not to propose a novel training algorithm but to demonstrate the benefits of using \textit{any} second-order method. The reason is twofold: in the nonlinear regime, we achieve faster and better convergence, while in the linear regime, where fast convergence can be achieved by first-order methods, the advantage of second-order methods lies in their ability to alleviate spectral bias.

Our work is organized as follows: Section \ref{sec:PINNs} introduces PINNs, and Section \ref{sec:NTK} covers the NTK theory, comparing its dynamics in linear and nonlinear PDEs. Section \ref{sec:convergence} examines the convergence guarantees of second-order optimization methods. Finally, Section \ref{sec:numerics} presents numerical experiments that validate our theoretical insights.

\section{Physics-Informed Neural Networks}\label{sec:PINNs}
We address the following PDE formulated on a bounded domain $\Omega \subset \R^{d_{\mathrm{in}}}$,
\begin{equation}
\label{eq:pde}
\begin{split}
  \mathcal{R}u(x) &= f(x), \quad x\in\Omega,  \\
    u(x) & =g(x),\quad x\in\partial\Omega.
\end{split}
\end{equation}
Here, the PDE is defined with respect to the differential operator $\mathcal{R}$, while the boundary and initial conditions are collected in the function $g$. Notice that $\Omega$ can be either a spatial or spatio-temporal domain, depending on whether the PDE is time-dependent or not.
PINNs aim to approximate the PDE solution $u: \Omega \to \mathbb{R}^{d_{\mathrm{out}}}$ with a neural network $u_\theta $ parametrized by $\theta$, which is a vector containing all the parameters of the network. The ``Physics-Informed'' nature of the neural network $u_\theta$ lies in the choice of the loss function employed for training
\begin{equation*}\label{eq:loss}
    \mathcal{L}(\theta) = \frac12 \int_\Omega |\mathcal{R}u_\theta(x) -f(x)|^2 dx  + \frac{1}{2} \int_{\partial \Omega} |u_\theta(x) - g(x)|^2 d\sigma(x),
\end{equation*}
where $\sigma$ denotes a measure on the surface $\partial \Omega$.
In this work, we specifically focus on scenarios where the PDE involves a nonlinear differential operator. Moreover, without loss of generality we consider the case where $f(x)=0$. Since the function $f(x)$ does not depend on the parametrization, all of our results hold also for the case when it is nonzero. Moreover, we express \eqref{eq:pde} as
\begin{equation}
\label{eq:nonlinear_pde}
\begin{split}
  R(\Phi[u](x)) =0,& \quad x\in\Omega,  \\
    u(x)=g(x),&\quad x\in\partial\Omega,
\end{split}
\end{equation}
where $\Phi[u]:\R^{d_{\mathrm{in}}} \to \R^{k \times d_{\mathrm{out}}}$, defined as
\begin{equation}\label{eq:Phi_def}
\Phi[u](x)=[u(x),\,\partial_{x}u(x),\, \partial^2_{x}u(x), \ldots,\,\partial^k_{x}u(x)],
\end{equation}
denotes a vector encompassing all (possibly mixed) derivatives of $u$ until order $k$, while $R: \R^{k\times d_{\mathrm{out}}} \to \R$ represents a differentiable function of the components of $\Phi[u]$. 

\begin{remark}
The importance of the function $R$ lies in its ability to completely encode the nonlinearity of the PDE, while the term $\Phi$ remains linear.
Furthermore, for numerous well-known nonlinear PDEs (such as Burgers' or Navier-Stokes equations), the function $R$ exhibits a distinctive structure as it takes the form of a second-order polynomial.
\end{remark}
To illustrate this, we consider the example of the inviscid Burgers' equation, which for $(\tau,x) \in \Omega$ is expressed as $\partial_{\tau} u + u\,\partial_{x}u = 0$, where $\tau$ represents time and $x$ the space variable. It follows that 
\begin{equation*}
\begin{split}
    \Phi[u](\tau,x) &= [ u(\tau,x), \, \partial_{\tau} u(\tau,x),\,  \partial_{x} u(\tau,x)], \\
    R(z_1, z_2, z_3) &= z_2 + z_1 z_3.
\end{split}
\end{equation*}

\section{Neural Tangent Kernel for PINNs}\label{sec:NTK}
We now introduce and develop the NTK for PINNs, inspired by the definition in \cite{wang2022and}. We employ a fully-connected neural network featuring a single hidden layer, as follows
\begin{equation}\label{eq:network}
    u_\theta(x):=\frac{1}{\sqrt{m}}W^1\cdot\sigma(W^0 x+b^0)+b^1,
\end{equation}
for any $x \in \R^{d_\text{in}}$. Here, $W^0 \in \R^{m\times d_{\text{in}}}$ and $b^0\in\R^m$ denote the weights matrix and bias vector of the hidden layer, while $W^1\in \R^{d_{\text{out}}\times m}$ and $b^1\in \R^{d_{\text{out}}}$ are the corresponding parameters of the outer layer. Additionally, $\sigma: \R \to \R$ is a smooth coordinate-wise activation function, such as the hyperbolic tangent, which is a common choice for PINNs.
Furthermore, we adopt the NTK rescaling $\frac{1}{\sqrt{m}}$ to adhere to the methodology introduced in the original work \cite{jacot2018neural}. This is crucial for achieving a consistent asymptotic behavior of neural networks as the width of the hidden layer approaches infinity. In the following, for brevity,  we denote with $\theta$ the collection of all the trainable parameters of the network, i.e. $W^1,W^0,b^1, b^0$.

\begin{remark}
For the sake of brevity, we focus on the case of neural networks with a single hidden layer. However, the outcomes derived in this scenario may be extended to deep networks. We leave this extension to future works and refer to \cite{seleznova2022analyzing, seleznova2022neural} for results on finite networks with multiple hidden layers.
\end{remark}

We consider the discrete loss on the collocation points $x_i^r \in \Omega$ and the boundary points $x_i^b \in \partial \Omega$,
\begin{equation}\label{eq:loss_collocation}
    L(\theta) = \frac{1}{2N_r} \sum_{i=1}^{N_r} |r_\theta(x_i^r)|^2 + \frac{1}{2N_b}\sum_{i=1}^{N_b} |u_\theta(x_i^b)-g(x_i^b)|^2,
\end{equation}
where $r_\theta(x_i^r)= R(\Phi[u_\theta](x_i^r))
$ indicates the residual term.
Furthermore, $N_r$ and $N_b$ denote the batch size of, respectively, the collection of $\mathbf{x}^r = \{x_i^r\}_{i=1}^{N_r}$ and $\mathbf{x}^b =  \{x_i^b\}_{i=1}^{N_b}$, which are the discrete data used for training.
We now consider the minimization of \eqref{eq:loss_collocation} as the gradient flow
\begin{equation}\label{eq:gradient_flow}
    \partial_t\theta(t) = -\nabla L(\theta(t)).
\end{equation}
Using the following notation
\begin{equation}\label{eq:collocation}
\begin{split}
    u_{\theta}(\mathbf{x}^b)= \left \{u_{\theta(t)}(x_i^b)\right\}_{i=1 }^{N_b},\quad \quad \quad r_\theta(\mathbf{x}^r) = \left \{r_{\theta(t)}(x_i^r) \right \}_{i=1}^{N_r},    
\end{split}
\end{equation}
we can characterize how these quantities evolve during the gradient flow, through the NTK perspective.

\begin{lemma}\label{lem:grad_flow}
Given the data \eqref{eq:collocation} and the gradient flow \eqref{eq:gradient_flow}, then $u_\theta$ and $r_\theta$ satisfy the following 
\begin{gather}\label{eq:dynamic_kernel}
 \begin{bmatrix} \partial_t u_{\theta(t)}(\x^b) \\ \partial_t r_{\theta(t)}(\x^r) \end{bmatrix}
 =
 - K(t)
   \begin{bmatrix}
   u_{\theta(t)}(\x^b)-g(\x^b) \\
   r_{\theta(t)}(\x^r)
   \end{bmatrix},
\end{gather}
where $K(t) = J(t)J(t)^T$ and 
\begin{equation}
\label{eq:ntk_def}
    J(t) =\begin{bmatrix} \partial_\theta u_{\theta(t)} (\x^b) \\ \partial_{\theta} r_{\theta(t)} (\x^r) \end{bmatrix}.
\end{equation}
\end{lemma}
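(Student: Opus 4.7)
\textbf{Proof plan for Lemma \ref{lem:grad_flow}.} The plan is a direct chain-rule calculation: differentiate $u_{\theta(t)}(x_i^b)$ and $r_{\theta(t)}(x_i^r)$ in $t$, substitute the gradient flow \eqref{eq:gradient_flow}, and then factor the result into the bilinear form $J J^T$.

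First, I would fix a single collocation point and apply the chain rule to write
\[
\partial_t u_{\theta(t)}(x_i^b) \;=\; \partial_\theta u_{\theta(t)}(x_i^b)\,\partial_t\theta(t), \qquad \partial_t r_{\theta(t)}(x_i^r) \;=\; \partial_\theta r_{\theta(t)}(x_i^r)\,\partial_t\theta(t),
\]
where $\partial_\theta u_{\theta(t)}(x_i^b)$ and $\partial_\theta r_{\theta(t)}(x_i^r)$ are row vectors indexed by the parameter coordinates. Stacking these identities over $i=1,\dots,N_b$ and $i=1,\dots,N_r$ produces exactly the Jacobian $J(t)$ from \eqref{eq:ntk_def} acting on $\partial_t\theta(t)$.

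Next, I would compute $\nabla L(\theta(t))$ from \eqref{eq:loss_collocation} by differentiating each squared term, which yields
\[
\nabla L(\theta(t)) \;=\; \frac{1}{N_b}\sum_{i=1}^{N_b}\bigl(u_{\theta(t)}(x_i^b)-g(x_i^b)\bigr)\,\partial_\theta u_{\theta(t)}(x_i^b)^T \;+\; \frac{1}{N_r}\sum_{i=1}^{N_r} r_{\theta(t)}(x_i^r)\,\partial_\theta r_{\theta(t)}(x_i^r)^T.
\]
This is precisely $J(t)^T$ (with the appropriate $1/N_b$, $1/N_r$ weights absorbed into the definition of $J$ or the inner product) applied to the residual column vector $[\,u_{\theta(t)}(\x^b)-g(\x^b);\; r_{\theta(t)}(\x^r)\,]$. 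Substituting into the gradient flow and then into the chain-rule expressions above, I obtain
\[
\begin{bmatrix} \partial_t u_{\theta(t)}(\x^b) \\ \partial_t r_{\theta(t)}(\x^r)\end{bmatrix} \;=\; -\,J(t)\,J(t)^T\begin{bmatrix} u_{\theta(t)}(\x^b)-g(\x^b) \\ r_{\theta(t)}(\x^r)\end{bmatrix},
\]
which is \eqref{eq:dynamic_kernel} with $K(t)=J(t)J(t)^T$.

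The calculation is essentially bookkeeping rather than deep, so I do not expect a serious obstacle. The only delicate point is being consistent with the orientation of $\partial_\theta u_{\theta(t)}(x_i^b)$ as a row vector in $J(t)$ versus a column vector appearing in $\nabla L(\theta(t))$, and tracking where the sample-average normalizations $1/N_b$ and $1/N_r$ sit. Once those conventions are fixed — for instance by absorbing the factors $1/\sqrt{N_b}$ and $1/\sqrt{N_r}$ symmetrically into the rows of $J(t)$ — the identity $K(t)=J(t)J(t)^T$ is immediate and no smoothness or width assumption on the network is needed; the lemma is a purely deterministic consequence of the chain rule.
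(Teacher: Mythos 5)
Your proposal is correct and is essentially the proof the paper defers to in \cite{wang2022and}: differentiate the stacked outputs in $t$ via the chain rule, substitute the gradient flow $\partial_t\theta=-\nabla L$, and recognize $\nabla L$ as $J(t)^T$ applied to the stacked residual vector. The one caveat is your normalization fix: absorbing $1/\sqrt{N_b}$ and $1/\sqrt{N_r}$ symmetrically into the rows of $J$ would give cross-blocks weighted by $1/\sqrt{N_bN_r}$ rather than the $1/N_b$ and $1/N_r$ that actually arise from \eqref{eq:loss_collocation}, so $K=JJ^T$ holds exactly only if the loss is taken as an unnormalized sum (as in the cited reference) or the sample weights are carried as a separate diagonal factor.
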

\begin{proof}
    The proof is presented in \cite{wang2022and}.
\end{proof}
We provide more details about the construction of $J(t)$ in \cref{appA}.
The matrix $K$ is also referred to as Gram matrix. The analysis of Gram matrices and their behavior in the infinite-width limit  \cite{du2018power, du2019gradient} yields results akin to the NTK analysis. It is important to note that Lemma \ref{lem:grad_flow} is applicable to any type of sufficiently regular differential operator. 

\subsection{The difference between linear and nonlinear PDEs}\label{sec:linearNTK}
In the work \cite{wang2022and}, PINNs have been thoroughly investigated using the NTK, but only in the case of linear PDEs. Additionally, \cite{liu2020linearity} extensively explores the similar case of standard neural networks with linear output. In particular, they show that in the infinite-width limit, the NTK is deterministic under proper random initialization and stays constant during training. Thereby, the dynamics in \eqref{eq:dynamic_kernel} is equivalent to kernel regression and has an analytical solution expressed in terms of the kernel. As noted in \cite{liu2020linearity}, the constancy of the NTK during training is equivalent to the linearity of the model. This characteristic is related to the vanishing of the (norm of the) Hessian of the network's output in the infinite-width limit. These well-known results are reported in \cref{appAB}. In \cite{zhou2024neural}, the same convergence results for Gram matrices hold for nonlinear PDEs when using networks as in \eqref{eq:network} with a scaling of $\frac{1}{m^s}$, where $s > \frac{1}{2}$. However, this scaling is inconsistent with the NTK model, so we focus on the unexplored case where $s = \frac{1}{2}$. The novel contribution of our paper lies in demonstrating that in this regime this phenomenon does not hold true when dealing with nonlinear PDEs, which we prove in this section. The network architecture and its associated assumptions are relatively standard, so we refer to Assumption \ref{ass:linear_network} in \cref{appAB}. However, it is essential to delineate the specific assumptions related to the nonlinear PDE.

\begin{assumption}[on $\mathcal{R}$]\label{ass:nonlinear_R}
The differential operator $\mathcal{R}$ is nonlinear, hence the function $R$ is nonlinear. Moreover, the gradient $\nabla R$ is continuous.
\end{assumption}
The first distinction with linear PDEs arises in the convergence as $m\to \infty$ of the NTK at initialization.
\begin{theorem}\label{prop:limit_K}
Consider a fully-connected neural network given by \eqref{eq:network} satisfying Assumption \ref{ass:linear_network}. Moreover, the PDE satisfies Assumption \ref{ass:nonlinear_R}. Then, under a Gaussian random initialization $\theta(0)$, it holds
\begin{equation*}
    K(0) \overset{\mathcal{D}}{\to} \Bar{K}\quad \text{as } m\to \infty,
\end{equation*}
where the limit is in distribution and $\Bar K$ is not deterministic, but its law can be explicitly characterized.
\end{theorem}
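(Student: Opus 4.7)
The plan is to decompose $K(0) = J(0) J(0)^\top$ into the boundary--boundary, boundary--residual, and residual--residual blocks coming from the block form of $J(0)$. The boundary--boundary block is the standard PINN NTK applied to $u_\theta(x^b_i)$, which under Assumption~\ref{ass:linear_network} converges in probability to the deterministic limit recalled in \cref{appAB}. All of the nontrivial behaviour is therefore concentrated in the residual blocks, which I would analyse by applying the chain rule to $r_\theta(x) = R(\Phi[u_\theta(x)])$:
\begin{equation*}
\partial_\theta r_\theta(x) = \nabla R\bigl(\Phi[u_\theta(x)]\bigr)^\top \, \partial_\theta \Phi[u_\theta(x)].
\end{equation*}
Substituting this into $K_{rr}$ gives the factorisation
\begin{equation*}
(K_{rr}(0))_{ij} = \nabla R\bigl(\Phi[u_{\theta(0)}(x^r_i)]\bigr)^\top \, M_{ij}(0) \, \nabla R\bigl(\Phi[u_{\theta(0)}(x^r_j)]\bigr),
\end{equation*}
where $M_{ij}(0) := \partial_\theta \Phi[u_{\theta(0)}(x^r_i)] \, \partial_\theta \Phi[u_{\theta(0)}(x^r_j)]^\top$ is a derivative-extended NTK matrix, and an analogous factorisation handles the $K_{br}$ block.

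Next I would invoke two standard infinite-width facts, applied to the $\Phi$-augmented network. First, an NTK-type concentration gives $M_{ij}(0) \to \bar M_{ij}$ in probability, to a deterministic matrix depending only on $\sigma$ and on $x^r_i, x^r_j$; this is essentially the computation of \cite{wang2022and} carried out up to derivative order $k$, which is permissible because $\sigma$ is smooth and the NTK rescaling $1/\sqrt{m}$ triggers the usual CLT on sums over the $m$ hidden units. Second, the NNGP limit gives joint convergence in distribution
\begin{equation*}
\bigl(\Phi[u_{\theta(0)}(x^r_i)]\bigr)_{i=1}^{N_r} \;\overset{\mathcal{D}}{\to}\; (G_i)_{i=1}^{N_r},
\end{equation*}
where $(G_i)_i$ is a Gaussian vector whose covariance admits an explicit closed form. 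Because $\nabla R$ is continuous by Assumption~\ref{ass:nonlinear_R}, the continuous mapping theorem yields $\nabla R(\Phi[u_{\theta(0)}(x^r_i)]) \to \nabla R(G_i)$ jointly in distribution. Combining this with the in-probability convergence of the $M_{ij}(0)$ via Slutsky's theorem produces
\begin{equation*}
(\bar K_{rr})_{ij} = \nabla R(G_i)^\top \, \bar M_{ij} \, \nabla R(G_j),
\end{equation*}
together with the analogous formula for $\bar K_{br}$, and realises the law of $\bar K$ as the explicit pushforward of the Gaussian law of $(G_i)_i$ under $\nabla R$ sandwiched by the deterministic $\bar M$.

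Non-determinism of $\bar K$ is then immediate: Assumption~\ref{ass:nonlinear_R} forces $\nabla R$ to be non-constant, so provided the NNGP covariance of the $G_i$ is nondegenerate, $\nabla R(G_i)$ is a genuinely random vector and $\bar K_{rr}$ is not almost surely constant. In the linear case $R$ is affine, $\nabla R$ is a constant vector that factors out of the limit, and one recovers the classical deterministic NTK. The main technical obstacle is the joint mode-of-convergence bookkeeping: one must upgrade the standard NTK computation to convergence \emph{in probability} for $\Phi$ and not just for $u_\theta$, and verify joint convergence of $(\Phi[u_\theta(x^r_i)])_i$ across all collocation points and derivative indices up to order $k$. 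Both reduce to a CLT on sums over the $m$ hidden units whose summands depend smoothly on shared Gaussian pre-activations; the smoothness of $\sigma$ then allows one to interchange differentiation in $x$ with the sum, after which the argument is a routine moment calculation.
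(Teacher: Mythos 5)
Your proposal is correct and follows essentially the same route as the paper: the entrywise factorisation $(K_{rr}(0))_{ij} = \nabla R(\Phi[u_{\theta(0)}](x^r_i))^\top M_{ij}(0)\, \nabla R(\Phi[u_{\theta(0)}](x^r_j))$ is exactly the paper's matrix identity $K(0)=\Lambda_R(0)\,K_\Phi(0)\,\Lambda_R(0)^T$, and your two key ingredients (in-probability convergence of the $\Phi$-extended kernel via the law of large numbers, and the Gaussian-process limit of $\Phi[u_{\theta(0)}]$ combined with the continuous mapping and Slutsky theorems) correspond precisely to the paper's Lemma~\ref{lem:deterministic_lemma} and Proposition~\ref{prop:convergence_Phi}. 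Your additional remarks on joint convergence across collocation points and on the nondegeneracy of the limiting covariance (needed for genuine randomness of $\bar K$) are consistent with, and slightly more explicit than, what the paper records.
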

\begin{proof}
A detailed proof is in Appendix \ref{appB}. However, the basic idea is to reformulate the kernel as
\begin{equation*}
    K(0) = \Lambda_R(0) \, K_\Phi(0) \,  \Lambda_R(0)^T,
\end{equation*}
where the matrix $K_\Phi(0)$ enclose the linear components of $\mathcal{R}$, hence the derivatives of the network's output, while the matrix $\Lambda_R(0)$ depends on the gradient of $R$ (so its contribution is relevant just in the nonlinear case). We can establish the convergence in probability of $K_\Phi(0)$ to a deterministic matrix by taking advantage of the linearity of the operator $\Phi$ and commuting $\Phi$ and $\partial_\theta$ (see Lemma \ref{lem:deterministic_lemma}). The matrix $\Lambda_R(0)$  only converges in distribution, since it is a function of the network output and its derivatives, whose limits are Gaussian Processes at initialization by Proposition \ref{prop:convergence_Phi}.
\end{proof}



Next, we focus on the NTK behavior during training.

\begin{proposition}\label{prop:K_non_constant}
Under Assumption \ref{ass:linear_network} on the network, and Assumption \ref{ass:nonlinear_R} on the PDE, assume additionally that $R$ is a real analytic function. Let $u_*$ be a solution of the corresponding PDE and suppose that for every $m\in\mathbb{N}$ there exists $t_m$ such that 
\begin{equation}\label{eq:ass_training}
    \|u_\theta(t_m)-u_*\|_{\mathcal{C}^k}\leq \varepsilon_m, \,\text{ with } \varepsilon_m\to0 \,\text{ as }\, m\to\infty.
\end{equation}
Finally, let $\theta(t)$ be obtained through gradient flow as defined in \eqref{eq:gradient_flow} and denote by $K(t)$ the corresponding NTK. For $\theta(0) \sim \mathcal{N}(0,I_m)$, the following holds:  
    \begin{equation*}
        \lim_{m\to \infty} \sup_{t\in [0,T]} \| K(t)-K(0)\| >0 \quad \text{a.s.}
    \end{equation*}
\end{proposition}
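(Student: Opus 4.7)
The plan is to build on the decomposition used in the proof of Theorem \ref{prop:limit_K}, namely $K(t) = \Lambda_R(t)\, K_\Phi(t)\, \Lambda_R(t)^T$, where $\Lambda_R(t)$ is the block factor containing $\nabla R\bigl(\Phi[u_{\theta(t)}](x_i^r)\bigr)$ and $K_\Phi(t)$ collects the inner products of the parameter-gradients of the linear features $\Phi[u_{\theta(t)}]$. Since $\Phi$ is linear and commutes with $\partial_\theta$, the sub-kernel $K_\Phi(t)$ is the NTK of a purely linear model in the features $\Phi[\partial_\theta u_\theta]$, and hence admits a deterministic infinite-width limit independently of the nonlinear PDE. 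All randomness and all time-dependence of $K(t)$ that cannot be tamed in the limit must therefore enter through $\Lambda_R(t)$; the proof strategy is to exhibit a training time $t_m \in [0,T]$ at which $\Lambda_R(t_m)$ becomes deterministic, contrasting with the genuinely random limit of $\Lambda_R(0)$ identified in Theorem \ref{prop:limit_K}.

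First, I would verify that $K_\Phi(t)$ remains close to its deterministic limit $\bar K_\Phi$ at both $t=0$ and $t=t_m$. At $t=0$ this is the content of Lemma \ref{lem:deterministic_lemma} used in Theorem \ref{prop:limit_K}. At $t=t_m$ one invokes the standard lazy-training estimate applied only to the linear-model features $\Phi[\partial_\theta u_\theta]$: since these features depend on $\theta$ in the same way as the network itself, the NTK-rescaled shallow architecture guarantees their $O(1/\sqrt{m})$ stability under any bounded parameter displacement, yielding $K_\Phi(t_m) \to \bar K_\Phi$ in probability as $m \to \infty$. Second, I would use hypothesis \eqref{eq:ass_training}: the $\mathcal{C}^k$-convergence $u_{\theta(t_m)} \to u_*$ implies pointwise convergence $\Phi[u_{\theta(t_m)}](x_i^r) \to \Phi[u_*](x_i^r)$, and by continuity of $\nabla R$ (Assumption \ref{ass:nonlinear_R}), $\Lambda_R(t_m) \to \Lambda_R^* := \Lambda_R(u_*)$, which is a deterministic matrix. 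Combining the two limits gives $K(t_m) \to \Lambda_R^* \bar K_\Phi (\Lambda_R^*)^T$ in probability, i.e.\ a deterministic limit.

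The final step is the contradiction with Theorem \ref{prop:limit_K}: at $t=0$, $\Lambda_R(0)$ is evaluated on the limiting Gaussian process $\Phi[u_{\theta(0)}]$ (see Proposition \ref{prop:convergence_Phi}), producing a non-deterministic limiting law; hence $K(0) \overset{\mathcal{D}}{\to} \bar K := \bar\Lambda_R \bar K_\Phi \bar\Lambda_R^T$. The random matrix $\bar K$ and the deterministic matrix $\Lambda_R^* \bar K_\Phi (\Lambda_R^*)^T$ coincide only on the event $\bigl\{\nabla R(\Phi[u_{\theta(0)}](x_i^r)) = \nabla R(\Phi[u_*](x_i^r)) \text{ for all } i\bigr\}$. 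Because $R$ is real analytic and nonlinear, $\nabla R$ is a non-constant real analytic map, so the set of points where $\nabla R$ takes a prescribed value is a proper analytic subvariety and has Lebesgue measure zero; the nondegenerate limiting Gaussian vector at the collocation points therefore avoids it almost surely. This yields $\|K(t_m) - K(0)\| \to c > 0$ a.s., and since $t_m \in [0,T]$, the supremum over $t \in [0,T]$ inherits the same lower bound.

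The main obstacle, in my view, is the uniform-in-time control of $K_\Phi$ along a trajectory that is not itself linearized: one must show that the ``linear part'' of the NTK remains approximately constant even though the full NTK does not, which requires isolating the $\Phi$-sub-kernel from the nonlinear corrections coming from $R$. The subsidiary difficulty is the passage from distributional to almost sure inequality, where the real-analyticity assumption on $R$ and a nondegeneracy statement for the limiting Gaussian process at the finitely many collocation points must be combined carefully to rule out the ``bad'' coincidence event.
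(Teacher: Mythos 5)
Your proposal follows essentially the same route as the paper's proof: the identical decomposition $K(t) = \Lambda_R(t)\,K_\Phi(t)\,\Lambda_R(t)^T$, the same reduction via the near-constancy and determinism of $K_\Phi$ (the paper's Lemmas \ref{lem:deterministic_lemma} and \ref{lem:K_Phi_constant}), convergence of $\Lambda_R(t_m)$ to the deterministic $\nabla R(\Phi[u_*])$ via \eqref{eq:ass_training}, and the real-analyticity-plus-nondegenerate-Gaussian argument to rule out the coincidence event almost surely. The one imprecision is your claim that the two limiting matrices coincide \emph{only} on the event $\{\nabla R(\Phi[u_{\theta(0)}](x_i^r)) = \nabla R(\Phi[u_*](x_i^r))\}$ --- congruent matrices can agree without the conjugating factors agreeing --- but the repair is exactly in your spirit and is what the paper does: fix one collocation point and consider the scalar real-analytic function $w \mapsto \nabla R(w)\, K_\Phi(0)_{(x,x)}\, \nabla R(w)^T$, show it is not identically equal to its value at $\Phi[u_*](x)$ because $R$ is nonlinear and analytic, and conclude that its level set is Lebesgue-null and hence avoided a.s.\ by the nondegenerate limiting Gaussian $\Phi[u_{\theta(0)}](x)$.
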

\begin{proof}
The proof can be found in Appendix \ref{appC}.
\end{proof}
\begin{remark}
It is worth noticing that our result holds under the assumption that a neural network with $m\to \infty$ can adequately approximate the solution $u^\star$ of the PDE \eqref{eq:pde}, and that the training process is successful in achieving this approximation. The first assumption is justified by results such as the universal approximation theorem for neural networks \cite{cybenko1989approximation}. Despite this optimistic training scenario, as demonstrated in Proposition \ref{prop:K_non_constant}, the constancy of the kernel is unattainable.
\end{remark}


In the context of nonlinear PDEs, converging to a linear regime is unattainable, even in the infinite-width limit, and this inability stems from the spectral norm of $H_r$, which is the Hessian of the residuals $r_\theta$ with respect to the parameters $\theta$. Indeed, in the linear scenario, the convergence of $\| H_r\|$ to $0$ as $m\to\infty$ is crucial for demonstrating convergence to the linear regime, as established in Proposition \ref{prop:linear_NTK}. Similar conclusions have been drawn in \cite{liu2020linearity} for various deep learning architectures. However, we now show that this property does not hold for nonlinear PDEs.

\begin{proposition}\label{prop:H_non_zero}
Under Assumptions \ref{ass:linear_network} and  \ref{ass:nonlinear_R} on the network and on the PDE, let us further assume that $R$ is a second-order polynomial. Then, the Hessian of the residuals $H_r$ is not sparse and 
\begin{equation*}
    \lim_{m\to \infty}\|H_r\| \geq \tilde{c},
\end{equation*}
where the constant $\tilde{c}$ does not depend on $m$.
\end{proposition}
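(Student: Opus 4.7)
The plan is to expand $H_r$ by the chain rule and exploit the special second-order polynomial form of $R$. Writing
\[
R(z) \;=\; a + b^T z + \tfrac12\, z^T M z,
\]
for $z\in\R^{k\cdot d_{\text{out}}}$, with $M$ a \emph{constant} symmetric matrix that is necessarily nonzero (Assumption \ref{ass:nonlinear_R} rules out $M=0$), and using the linearity of $\Phi$ so that $\partial_\theta\Phi[u_\theta]=\Phi[\partial_\theta u_\theta]$ and $\partial_\theta^2\Phi[u_\theta]=\Phi[\partial_\theta^2 u_\theta]$, I obtain the decomposition
\[
H_r \;=\; \underbrace{\partial_\theta\Phi[u_\theta]^T\, M\, \partial_\theta\Phi[u_\theta]}_{=:A} \;+\; \underbrace{\bigl(b + M\Phi[u_\theta]\bigr)^T \Phi[\partial_\theta^2 u_\theta]}_{=:B}.
\]
Here $B$ is exactly the kind of object that appears in the linear-$R$ analysis of Proposition \ref{prop:linear_NTK}, while $A$ is the genuinely new nonlinear contribution coming from $\nabla^2 R\equiv M\neq 0$.

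For the term $B$, the NTK rescaling $1/\sqrt m$ forces $\|\partial_\theta^2 u_\theta\|=O(m^{-1/2})$ with high probability (the well-known sparse-Hessian result cited in \cref{appAB}). Applying $\Phi$ only multiplies by a finite number of $x$-derivatives, which produce $O(1)$ factors from the first-layer weights $W^0$ and from the bounded $\sigma^{(j)}$. Hence $\|\Phi[\partial_\theta^2 u_\theta]\|\to 0$ as $m\to\infty$. Combined with the tightness of $b+M\Phi[u_\theta]$, which follows from the Gaussian-process limit of Proposition \ref{prop:convergence_Phi}, this yields $\|B\|\to 0$ almost surely.

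For the term $A$, the key observation is that the nonzero singular values of $\partial_\theta\Phi^T M\,\partial_\theta\Phi$ coincide with those of $M K_\Phi$, where $K_\Phi(0) = \partial_\theta\Phi[u_\theta]\,\partial_\theta\Phi[u_\theta]^T$ is precisely the matrix analyzed in Theorem \ref{prop:limit_K}. By Lemma \ref{lem:deterministic_lemma}, $K_\Phi(0)$ converges in probability to a deterministic positive-definite matrix $\bar K_\Phi$, and therefore
\[
\|A\| \;=\; \|M K_\Phi\| \;\xrightarrow[m\to\infty]{}\; \|M\bar K_\Phi\| \;=:\; \tilde c \;>\; 0,
\]
where positivity follows from $M\neq 0$ and the invertibility of $\bar K_\Phi$. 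Since $\|H_r\|\geq \|A\|-\|B\|$, the lower bound in the statement follows. The non-sparsity claim is then immediate from the outer-product form of $A$: for generic pairs of parameters $(\theta_a,\theta_b)$ the entry $\sum_{i,j}(\partial_{\theta_a}\Phi[u_\theta])_i\, M_{ij}\,(\partial_{\theta_b}\Phi[u_\theta])_j$ is nonzero, including intra-layer pairs that are always zero for the block-sparse Hessian $\Phi[\partial_\theta^2 u_\theta]$ of the linear regime.

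The main obstacle I expect is the rigorous control of $\|\Phi[\partial_\theta^2 u_\theta]\|$. The bare bound on $\|\partial_\theta^2 u_\theta\|$ is standard, but propagating it through the several $x$-derivatives packaged inside $\Phi$ requires a careful uniform-in-$x$ argument: one has to show that the extra factors $W^0_j$, $\sigma^{(j)}(W^0 x + b^0)$ that $\partial_x$ produces do not spoil the $O(m^{-1/2})$ operator-norm decay. Once this bound is secured, the decomposition $H_r=A+B$ transfers the deterministic non-vanishing of $K_\Phi$ (already proved for Theorem \ref{prop:limit_K}) into the desired non-vanishing of $\|H_r\|$.
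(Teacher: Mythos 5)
Your decomposition $H_r=A+B$ with $A=\partial_\theta\Phi[u_\theta]^T\,\nabla^2R\,\partial_\theta\Phi[u_\theta]$ and $B=\nabla R(\Phi[u_\theta])^T\Phi[\partial_\theta^2u_\theta]$ is exactly the one in Appendix \ref{appD}, and your treatment of $B$ (the $O(m^{-1/2})$ decay of $\Phi[\partial_\theta^2 u_\theta]$, which you correctly flag as the technical crux) is precisely the content of Lemma \ref{lem:H_Phi}. Where you genuinely diverge is in lower-bounding $\|A\|$: you pass to the spectrum of $M K_\Phi$ via the identity relating the nonzero eigenvalues of $P^T M P$ and $M P P^T$, and then invoke the deterministic limit of $K_\Phi$ from Lemma \ref{lem:deterministic_lemma}. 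The paper instead tests $A$ against the explicit vector $\bar z=(1/\sqrt m,\ldots,1/\sqrt m)$, restricts to the $W^1$-block, and applies the law of large numbers to get $\|A\bar z\|_2\to w^T\nabla^2R\,w=:\tilde c$ with $w=\mathbb{E}_{u,v\sim\mathcal N(0,1)}[\Phi[\sigma(u\cdot+v)](x)]$. Two remarks on your route. First, a small imprecision: $MK_\Phi$ is not symmetric, so its \emph{singular} values do not match those of the symmetric matrix $A$; what you actually get is $\|A\|=\rho(MK_\Phi)$, the spectral radius, which still converges to $\rho(M\bar K_\Phi)$ and is positive whenever $\bar K_\Phi\succ0$ and $M\neq0$ (since $M\bar K_\Phi$ is similar to the nonzero symmetric matrix $\bar K_\Phi^{1/2}M\bar K_\Phi^{1/2}$). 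Second, your positivity hinges on the invertibility of $\bar K_\Phi$, which Lemma \ref{lem:deterministic_lemma} does not establish (the paper only asserts non-singularity of the closely related $\Sigma(x)$ in Appendix \ref{appC}); the paper's test-vector argument avoids this assumption entirely. On the other hand, your bound has a real advantage: the paper's constant $w^T\nabla^2R\,w$ can in principle vanish or be negative when $\nabla^2R$ is indefinite (as it is for Burgers', where $\nabla^2R$ has zero diagonal), whereas $\rho(M\bar K_\Phi)$ is strictly positive under the stated non-degeneracy. So, modulo the eigenvalue/singular-value correction and an explicit justification of $\bar K_\Phi\succ0$, your argument is sound and in one respect sharper than the paper's.
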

\begin{proof}
    The proof can be found in Appendix \ref{appD}, together with an explicit formula for $\tilde{c}$.
\end{proof}
\begin{remark}
For the latter result, we additionally require that $R$ is a second-order polynomial, which includes many classic nonlinear PDEs like Burgers' or Navier-Stokes equations.
\end{remark}

We summarize all our results and provide a comparison with the linear case in Table \ref{table:lin_vs_nonlin}. Motivated by the fact that the Hessian is not negligible, we shift our attention to second-order optimization methods and explore their convergence capabilities.

\begin{table}[t]
\centering
\begin{tabular}{|l| c |c|}
\hline
\, & \textbf{Linear PDEs} & \textbf{Nonlinear PDEs} \\
\hline
NTK at initialization & Deterministic\, & Random \,\textit{\textcolor{darkgray}{(\cref{prop:limit_K})}}\,\\
\hline
NTK during training & Constant\, & Dynamic \,\textit{\textcolor{darkgray}{(Proposition \ref{prop:K_non_constant}})}\,\\
\hline
Hessian $H_r$ & Sparse\, & Not sparse \,\textit{\textcolor{darkgray}{(Proposition \ref{prop:H_non_zero})}} \,\\
\hline
First-order convergence bound & $\sim \lambda_{\text{min}}(\Bar{K)}$ & \, $\sim0$ or $\lambda_{\text{min}}(K(t))$ \\
\hline
Second-order convergence bound & $\sim1$  &  $\sim 0$ or $1$ \textit{\textcolor{darkgray}{(Theorem \ref{thm:convergence_second_order})}}\\
\hline
\end{tabular}
\vspace{0.2cm}
\caption{Comparison of the theoretical results for linear and nonlinear PDEs.}\label{table:lin_vs_nonlin}
\end{table}

\section{Convergence results}\label{sec:convergence}
Before delving into second-order methods, let us revisit a convergence result for first-order ones. Traditional analyses of the gradient descent \eqref{eq:gradient_flow} often rely on the smoothness and convexity of the loss, assumptions that may not hold in the context of deep learning. As an alternative, numerous results concentrate on the infinite-width limit, particularly in connection with the NTK analysis.
While we refrain from presenting a formal proof, we highlight the notable result below.

\begin{theorem}\label{thm:convergence_linear}
Under Assumption \ref{ass:linear_R} on the PDE and Assumption \ref{ass:linear_network} on the network defined by \eqref{eq:network}, consider the scenario where $m$ is sufficiently large. With high probability on the random initialization, there exists a constant $\mu > 0$, depending on the eigenvalues of $K$, such that gradient descent, employing a sufficiently small step size $\eta$, converges to a global minimizer of \eqref{eq:loss_collocation} with an exponential convergence rate, i.e.
\begin{equation*}
L(\theta(t))\leq (1 - \eta \mu)^tL(\theta(0)).
\end{equation*}
\end{theorem}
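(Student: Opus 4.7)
The plan is to adapt the standard NTK-based convergence argument (in the spirit of \cite{du2019gradient} and \cite{wang2022and}) to the PINN setting with a linear differential operator. The starting point is the gradient-flow dynamics in Lemma \ref{lem:grad_flow}, which in the linear regime reduces the analysis of the loss to a linear ODE governed by the Gram matrix $K(t)$.

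First, I would establish a two-sided control on the NTK at initialization. Under Assumption \ref{ass:linear_R}, the nonlinear factor $\Lambda_R$ appearing in the decomposition used in the proof of Theorem \ref{prop:limit_K} disappears, and $K(0)$ concentrates around the deterministic limit $\bar K$ as $m\to\infty$ (this is the linear-PDE NTK limit recalled in Appendix \ref{appAB}). Assuming $\bar K$ is strictly positive definite, a standard concentration argument in the width $m$ yields, with high probability over $\theta(0)$,
$$\lambda_{\min}(K(0)) \geq \tfrac{1}{2}\lambda_{\min}(\bar K) \;=:\; \mu > 0.$$

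The second step is to show that the NTK stays close to its initial value along the entire training trajectory. In the linear PDE regime, the Hessian $H_r$ of the residuals has vanishing spectral norm as $m\to\infty$ (Proposition \ref{prop:linear_NTK} in Appendix \ref{appAB}). Combining this with a bound on the parameter displacement $\|\theta(t)-\theta(0)\|$, controlled via $\|\partial_t\theta(t)\|\le\sqrt{2L(\theta(t))}$, produces an estimate of the form $\sup_{t\in[0,T]}\|K(t)-K(0)\|\le\varepsilon(m)$ with $\varepsilon(m)\to 0$. Consequently $\lambda_{\min}(K(t))\geq \mu$ uniformly in $t$, with high probability. With these ingredients the continuous-time rate is immediate: letting $e(t)$ collect the boundary and residual errors appearing in \eqref{eq:dynamic_kernel},
$$\frac{d}{dt}\,\|e(t)\|^2 = -2\,e(t)^T K(t)\,e(t) \;\leq\; -2\mu\,\|e(t)\|^2,$$
so $L(\theta(t))\leq e^{-2\mu t}L(\theta(0))$. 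For the discrete gradient descent statement, I would then invoke the descent lemma: the Lipschitz continuity of $\nabla L$ on a ball around initialization (from smoothness of $\sigma$ and boundedness of $\mathbf{x}^r,\mathbf{x}^b$) gives a smoothness constant $L_{\mathrm{smooth}}$, while the lower bound $\lambda_{\min}(K(t))\geq\mu$ translates into the Polyak–\L{}ojasiewicz inequality $\|\nabla L(\theta)\|^2\geq 2\mu\,L(\theta)$. For any step size $\eta\lesssim 1/L_{\mathrm{smooth}}$, this yields $L(\theta(t+1))\leq(1-\eta\mu)L(\theta(t))$, which iterates to the claimed bound.

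The main obstacle is the second step: controlling the deviation of $K(t)$ from $K(0)$ along the \emph{entire} optimization trajectory, not merely at a single time. The standard way to close this loop is a bootstrap argument—assume the deviation bound holds up to time $t$, use it to bound the parameter movement through the exponential decay of the loss, and then verify that the resulting movement is small enough to reinforce the initial deviation bound with strict improvement. The quantitative threshold on $m$ in the statement, as well as the high-probability qualifier, originate precisely from the concentration tail and from the safety margin needed to close this bootstrap.
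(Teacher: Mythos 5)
The paper does not prove this theorem itself---it only cites \cite{gao2023gradient}, \cite{du2018power}, and \cite{liu2020linearity}---and your sketch is a faithful reconstruction of exactly the standard argument in those references: concentration of $K(0)$ to $\bar K$, stability of $K(t)$ along the trajectory via the vanishing Hessian and a bootstrap on parameter displacement, the resulting Polyak--\L{}ojasiewicz inequality, and the descent lemma for the discrete rate. The only point you leave as a hypothesis, the strict positive definiteness of $\bar K$, is precisely what the paper also imports from \cite{gao2023gradient}, so your proposal matches the intended proof in both structure and its external dependencies.
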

\begin{proof}
    See \cite{gao2023gradient}, \cite{du2018power}, \cite{liu2020linearity}, and others.
\end{proof}

It is noteworthy that this result is presented at the level of gradient descent, i.e. the discretization of the gradient flow \eqref{eq:gradient_flow}, which explains the constant $\eta$ representing its step size. Theorem \ref{thm:convergence_linear} has also been extended to various types of architectures in \cite{du2019gradient}. We emphasize that this convergence result is rooted in the applicability of the Polyak-Lojasiewicz condition which, in turn, is linked to the smallest eigenvalue of the tangent kernel (denoted with $\lambda_{\text{min}}$). In the case of linear PDEs, the tangent kernel $K(t)$ is positive definite \cite{gao2023gradient} for any $t\in [0,T]$, leading to positive eigenvalues. The key finding in this context is that if $m$ is sufficiently large, $K(t) \approx  \Bar{K}$, where $\Bar{K}$ is a deterministic matrix, which only depends on the training input and not on the network's parameters $\theta$. As a result, in the infinite-width regime, the dynamics \eqref{eq:dynamic_kernel} can be approximated by
\begin{gather}\label{eq:approx_linear}
 \begin{bmatrix} \partial_t u_\theta(\x^b) \\ \partial_t r_\theta(\x^r) \end{bmatrix}
 \approx
 - \Bar{K}
   \begin{bmatrix}
   u_\theta(\x^b)-g(\x^b) \\
   r_\theta(\x^r)
   \end{bmatrix}.
\end{gather}
In the linear case, the key steps (i.e. the fact that the NTK is deterministic and constant) of the convergence proof of Theorem \ref{thm:convergence_linear} cannot be adapted to  nonlinear PDEs. Indeed, the stochasticity of the matrix and its dynamic behavior during training make the reasoning of \cite{gao2023gradient} or \cite{du2018power} inapplicable, and it is challenging to show that the eigenvalues of $K(t)$ in the nonlinear case are uniformly bounded away from zero over training time. Nevertheless, we believe this question warrants further investigation. 

Another issue linked to the NTK's eigenvalues is the phenomenon recognized as \textit{spectral bias} by \cite{wang2021eigenvector, deshpande2022investigations, 
rahaman2019spectral}. 
This is related to the fast decay of the NTK's eigenvalues, which characterize the rate at which the training error diminishes. The presence of small or unbalanced eigenvalues leads to slow convergence, particularly for high-frequency components of the PDE solution, or even to training failure. This occurs regardless of the linearity of the PDE differential operator $R$. In the next section, we show that under certain assumptions, second-order methods can help mitigate both problems.

\subsection{Second-Order Optimization Methods}\label{sec:second-order-opt}
Due to all the aforementioned reasons and Proposition \ref{prop:H_non_zero}, our focus turns to the investigation of second-order optimization methods. These are powerful algorithms that leverage both the gradient and the Hessian of the loss function. Within this category, Quasi-Newton methods stand out as the most natural and widely known, relying on the Newton update rule
\begin{equation}\label{eq:newton_update}
    \theta(t+1)= \theta(t) -\left[\nabla^2L(\theta(t))\right]^{-1} \nabla L(\theta(t)).
\end{equation}
However, the application of this update step relies on second-order derivatives, which are prohibitively expensive to compute as the number of parameters in the model increases. Indeed, the core idea behind Quasi-Newton methods involves utilizing an approximation of the Hessian as follows
\begin{equation}\label{eq:approx_quasi_newton}
    \nabla^2L(\theta) = J^T(t)J(t) +H_r r_{\theta(t)} \approx J^T(t)J(t)
\end{equation} 
in the formula \eqref{eq:newton_update}. Here, $J(t)\in \R^{n \times p}$ represents the Jacobian of the loss at the training time $t$, and it aligns with the definition in \eqref{eq:ntk_def}. Since the Jacobian $J(t)$ is part of the evaluation of the gradient, 
the approximation \eqref{eq:approx_quasi_newton} does not necessitate the computation of higher-order derivatives. 

We now tackle the issues of spectral bias and slow convergence by presenting a result applicable to the Gauss-Newton method. 
In practice, when the number of parameters $p$ is larger than the number of samples $n$, the matrix $J^T(t)J(t)$ is surely singular. In this case, we consider the generalized inverse $(J^T(t)J(t))^\dagger$, instead of the inverse.

\begin{theorem}\label{thm:convergence_second_order}
Consider the parameter $\theta(t)$ obtained by the Gauss-Newton flow below
\begin{equation}\label{eq:gf_quasinewton}
\begin{split}
    &\partial_t \theta(t) = - (J^T(t)J(t))^{\dagger}\nabla L(\theta(t)).
\end{split}
\end{equation}
Then, the following holds
    \begin{equation}\label{eq:ode}
     \begin{bmatrix}
        \partial_t u_{\theta(t)}(\x^b) \\ \partial_t r_{\theta(t)}(\x^r) 
        \end{bmatrix} = - U(t)D(t) U(t)^T \begin{bmatrix}
            u_{\theta(t)}(\x^b)\\r_{\theta(t)}(\x^r)
        \end{bmatrix},
\end{equation}
where $U(t) \in \R^{n \times n}$ is a unitary matrix and $D\in \R^{n \times n}$ is a diagonal matrix with entries $0$ or $1$. In particular, if $J(t)$ is  full-rank for any $t\in [0,T]$, then convergence to a global minimum is attained.
\end{theorem}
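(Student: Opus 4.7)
The plan is to apply the chain rule to the vector of residuals, substitute the Gauss-Newton flow, and recognize the resulting matrix as an orthogonal projector via a singular value decomposition. First, I would set $F(\theta) := [u_\theta(\x^b) - g(\x^b);\; r_\theta(\x^r)] \in \R^n$ with $n = N_b\,d_{\mathrm{out}} + N_r$, so that (up to the fixed batch-normalization constants in \eqref{eq:loss_collocation}, which can be absorbed into a time rescaling or into a weighted pseudoinverse) the loss becomes $L(\theta) = \tfrac{1}{2}\|F(\theta)\|^2$ and its gradient takes the form $\nabla L(\theta) = J(t)^T F(\theta)$, with $J(t)$ the Jacobian already introduced in \eqref{eq:ntk_def}. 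A direct chain rule then gives $\partial_t F(\theta(t)) = J(t)\,\partial_t \theta(t)$, and substituting the flow \eqref{eq:gf_quasinewton} yields
$$\partial_t F(\theta(t)) \;=\; -\,J(t)\bigl(J(t)^T J(t)\bigr)^{\dagger} J(t)^T\, F(\theta(t)).$$

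The second step is to identify the matrix $P(t) := J(t)(J(t)^T J(t))^\dagger J(t)^T$ as the orthogonal projector onto $\mathrm{range}(J(t))$. Writing the singular value decomposition $J(t) = U(t)\Sigma(t)V(t)^T$ with $U(t) \in \R^{n\times n}$ unitary, one computes $\Sigma\,(\Sigma^T\Sigma)^\dagger \Sigma^T = D(t)$, where $D(t) \in \R^{n\times n}$ is the diagonal matrix with $D_{ii} = 1$ exactly when $\sigma_i(J(t)) > 0$ and $D_{ii} = 0$ otherwise. This gives $P(t) = U(t) D(t) U(t)^T$ and recovers precisely the ODE \eqref{eq:ode} of the statement.

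The third step, convergence under the full-rank assumption, is then immediate. If $J(t)$ has full row rank $n$ for every $t \in [0,T]$, then all singular values are strictly positive and $D(t) = I_n$; the ODE collapses to $\partial_t F = -F$, so Gr\"onwall (or direct integration) gives $\|F(\theta(t))\| \le e^{-t}\|F(\theta(0))\|$, and in particular $L(\theta(t)) \to 0$. Since $L \ge 0$ with equality at any $\theta$ satisfying the collocation equations, $\theta(t)$ converges to a global minimizer of \eqref{eq:loss_collocation}.

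The main obstacle, and the conceptual point on which the proof hinges, is the SVD identification of the pseudoinverse expression: it explains why Gauss-Newton descent acts only along directions in $\mathrm{range}(J(t))$ and therefore why the full-rank hypothesis is exactly what is needed for global convergence, since any component of $F$ lying in $\ker(J(t)^T)$ would otherwise be left fixed by the flow. A secondary technical issue is the well-posedness of the ODE \eqref{eq:gf_quasinewton} when the rank of $J(t)$ drops during training (the pseudoinverse is only lower semicontinuous), but under the stated full-rank assumption this difficulty is avoided and standard ODE theory produces a unique trajectory on $[0,T]$.
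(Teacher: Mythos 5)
Your proposal is correct and follows essentially the same route as the paper's proof in Appendix F: chain rule on the residual vector, substitution of the Gauss--Newton flow, and the SVD computation showing $J(J^TJ)^\dagger J^T = U D U^T$ is the orthogonal projector onto $\mathrm{range}(J(t))$, collapsing to $\partial_t F = -F$ under full rank. If anything you are slightly more careful than the paper --- you keep the $-g(\x^b)$ shift so that $\nabla L = J^T F$ actually holds, note the absorbed normalization constants, and flag the well-posedness of the flow when the rank drops --- but these are refinements of the same argument, not a different one.
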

\begin{proof}
    The proof is presented in \cref{appE}.
\end{proof}
This result is significant as it indicates that when utilizing second-order methods via \eqref{eq:gf_quasinewton}, convergence no longer depends on the eigenvalues of $K(t)$ as in \eqref{eq:approx_linear}, but rather on the elements of the diagonal matrix $D(t)$. Consequently, the training process becomes nearly spectrally unbiased, as the nonzero eigenvalues of the controlling matrix in \eqref{eq:ode} are all $1$s. Let us now compare the cases of linear and nonlinear PDEs,  in relation to the assumption of full-rankness of $J(t)$ and, consequently, the NTK.
\begin{itemize}
    \item \textbf{Linear PDEs:} recent research \cite{gao2023gradient} has theoretically confirmed that the NTK has full-rank in this case. Hence, convergence of second-order methods is achieved with \textit{all} eigenvalues equal to $1$, offering a notable advantage over \eqref{eq:approx_linear} since the training method is unaffected by the spectral bias.
    \item \textbf{Nonlinear PDEs:} showing theoretically the full-rankness is a complicated task, particularly in light of \cref{prop:K_non_constant}, which highlights the stochastic and dynamic nature of the NTK. Similarly, verifying numerically the full-rankness of $J(t)$ is impractical due to the matrix's ill-conditioning, as mentioned in \cite{wang2022and}. However, even if $J(t)$ is not full-rank, it holds that, although some singular values are zero, fast convergence for the remaining ones is attained.
\end{itemize}

Moreover, let us stress that the result in \cref{thm:convergence_second_order} applies to any network, including those with finite width. Thus, while the NTK model motivates the use of second-order methods, the key insights about spectral bias and convergence hold without assuming infinite width.

\begin{remark}\label{rmk:inexact_method}
In practice, the Gauss-Newton method becomes less computationally expensive when combined with inexact techniques such as Krylov subspace methods, conjugate gradient, BFGS, or LBFGS \cite{nocedal1999numerical}. It has been shown that BFGS and LBFGS asymptotically approach the exact Hessian under certain conditions \cite{lin2022explicit}. To extend our findings to more practical inexact methods, we can leverage these asymptotic convergence properties. However, while this approach is theoretically sound, the speed of convergence of quasi-Newton methods to the exact Newton method --- specifically their matrix approximation accuracy --- depends on the minimum eigenvalue of the Hessian \cite{lin2022explicit}[Theorem 6]. As discussed in our paper, the Hessian in PINNs is typically very poorly conditioned. As a result, quasi-Newton methods may require an impractically large number of training steps to converge to the true inverse Hessian and, thus, to begin training higher modes.
\end{remark}


\section{Numerical Experiments}\label{sec:numerics}


\begin{figure}[!ht]
    \centering    
    \includegraphics[width=\textwidth]{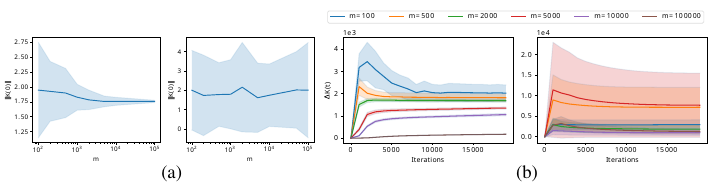}
    \vspace{-0.6cm}
    \caption{\textbf{(a)} Mean and standard deviation of the spectral norm of $K(0)$ as a function of the number of neurons $m$ for $10$ independent experiments. \textbf{Left:} linear case. \textbf{Right:} nonlinear case. \textbf{(b)} Mean and standard deviation of $\Delta K(t) := \frac{\|K(t)-K(0)\|}{\|K(0)\|}$ over the network's width $m$,  for $10$ independent experiments. \textbf{Left:} linear case. \textbf{Right:}
 nonlinear case.}
 \label{fig:NTK1}
\end{figure}

\subsection{Empirical validation of our NTK results}
First of all, we aim at numerically validate the results presented above, by comparing the NTK in case of linear and nonlinear PDEs. Our experiments are conducted on the following linear equation: $\partial^2_{x}u(x) = \frac{16}{\pi^2} \sin(\frac{4}{\pi}x)$. Meanwhile, as nonlinear PDE, we consider $u(x)\partial_{x}u(x) = \frac{16}{\pi^2} \sin(\frac{4}{\pi}x)$. Notably, these results exhibit consistency across various equations and experimental setups. \\
The result in Theorem \ref{prop:limit_K} is confirmed by the numerical experiments depicted in Figure~\ref{fig:NTK1}, part (a): in the linear case the NTK at initialization converges to a deterministic matrix when $m\to \infty$, while this does not happen in the nonlinear case. The statement of Proposition \ref{prop:K_non_constant} is confirmed in part (b) of Figure~\ref{fig:NTK1} by showing that the constancy of the NTK during training is not attainable in the nonlinear case. Moreover, the result in Proposition \ref{prop:H_non_zero} is supported by part (a) of \cref{fig:NTK2}, where we compare the sparsity of the Hessian at initialization $H_r(0)$ in both the linear and nonlinear case. Moreover, we observe that in the linear scenario $\|H_r\|$ decays as $m$ grows, contrarily to the nonlinear example. Similarly, we refer to \cref{fig:NTK2}, part (b) for a comparison of the eigenvalues when training with first-order or second-order methods on Burgers' equation.

\begin{figure}
    \centering
    \includegraphics[width=\textwidth]{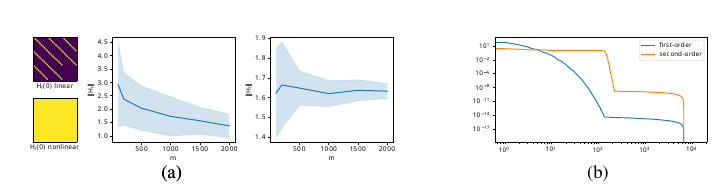}
    \vspace{-0.6cm}
    \caption{\textbf{(a) Left:} in yellow the non-zero components of the Hessian matrix at initialization (up in the linear case, down the nonlinear one). \textbf{Center:} mean and standard deviation of the spectral norm of the $H_r(0)$ over $m$ in the linear case (for $10$ independent experiments). \textbf{Right:} same as Center, but for a nonlinear PDE. \textbf{(b)} Eigenvalues of $K(0)$ for a first-order optimizer and $D(0)$ for a second-order method applied to Burgers' equation.}
 \label{fig:NTK2}
\end{figure}

\subsection{Employment of second-order methods}
Among all second-order methods, in our numerical experiments we make use of an existing variant of the Levenberg-Marquardt (LM) algorithm, as it offers further stability through the update rule
\begin{equation*}\label{eq:lm_update}
        \theta(t+1) = \theta(t) - \left[J^T(t)J(t) + \lambda \mathrm{Id}_{p}\right]^{-1} \nabla L(\theta(t)),
    \end{equation*}
where $\lambda$ is a damping parameter adjusted by the algorithm. In practice, the iterative step of LM can be considered as an average, weighted by $\lambda$, between the Gradient Descent step and a Gauss-Newton method. This aspect of the LM algorithm represents its crucial advantage over other Quasi-Newton methods such as Gauss-Newton or BFGS. Indeed, Quasi-Newton methods show good performance when the initial guess of the solution $u_\theta$ is close to the correct one. The update rule of LM avoids this issue by relying on simil-gradient descent steps at early iteration. Moreover, the parameter $\lambda$ typically decreases during training, in order to converge to a Quasi-Newton method when close to the optimum.
Our primary aim is to showcase the effectiveness of second-order methods for nonlinear PINNs, a point which has been supported by findings such as those in \cite{muller2023achieving}: their approach also employs a second-order method, akin to a Gauss-Newton method in function spaces. For details on the modified LM algorithm, along with pseudocode, we refer to \cref{appF}. 

\paragraph{Details on the Networks} The neural network architectures adopted in the experiments are standard Vanilla PINNs with hyperbolic tangent as activation function. All of the PINNs trained in our analysis are characterized by 5 hidden layers with 20 neurons each. Every training is performed for 10 independent neural networks initialized with Xavier normal distribution \cite{glorot2011deep}. All models are implemented in PyTorch \cite{paszke2019pytorch} and trained on a single NVIDIA A10 GPU. 

\paragraph{Test Cases} We assess our theoretical findings on the following equations:
\begin{itemize}
    \item \textit{Wave/Poisson/Convection Equation}: despite being linear PDEs, they represent a suitable scenario to showcase the detrimental effect of the spectral bias on the training of PINNs, due to the presence of high-frequency components in the solution.
    \item \textit{Burgers' Equation}: this nonlinear PDE is commonly used to test PINNs, and usually they reach a valid solution even with a first-order optimizer, due to the PDE's simplicity.
    \item \textit{Navier-Stokes Equation}: it poses challenges for both PINNs and classical methods, being a difficult nonlinear PDEs. We test the case of the fluid flow in the wake of a 2D cylinder \cite{2021NSFnets}.
\end{itemize}
For the sake of compactness, we refer to \cref{appF} for detailed descriptions of the mentioned PDEs, and to \cref{appG} for supplementary numerical experiments not included in the main text. We compare results obtained by the LM algorithm with those from commonly used optimizers for training PINNs, such as Adam \cite{adamoptimizer} and L-BFGS \cite{lbfgsoptimizer}. Where not stated otherwise, Adam is trained for $10^5$ iterations and LM for $10^3$ iterations. Additionally, we provide a comparison with other methods that are ad-hoc enhancements of PINNs, such as loss balancing \cite{wang2022and} (also known as NTK rescaling), Random Fourier Features (RFF) \cite{wang2021eigenvector}, and curriculum training (CT) \cite{krishnapriyan2021characterizing}. Our performance metric is the relative $L^2$ loss on the test set, detailed in \cref{appG} formula \eqref{eq:l2_rel}.

\paragraph{Linear PDEs affected by spectral bias}

\begin{figure}
  \centering    
    \includegraphics[width=\textwidth]{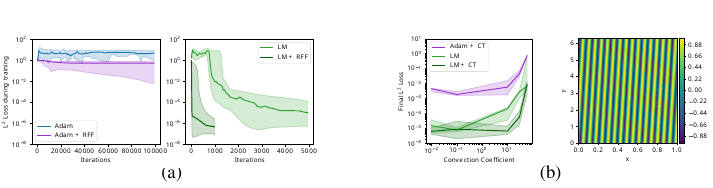}
\vspace{-0.6cm}
\caption{\textbf{(a)} Poisson equation: median and standard deviation of the relative $L^2$ loss for different optimizers over training iterations (repetitions over 10 independent runs). \textbf{(b)} Convection equation: median and standard deviation of the $L^2$ loss after $1000$ iterations achieved over 5 independent runs with and without CT for different values of the convection coefficient $\beta$ \textbf{(left)} and solution obtained with LM (and no other enhancement) after 5000 iterations with $\beta$ = 100 \textbf{(right)}.}
\label{fig:linear}
\end{figure}


In \cref{fig:linear}, we demonstrate the effectiveness of second-order methods in handling equations with high spectral bias. Part (a) of \cref{fig:linear} focuses on the Poisson equation with high-frequency components, for which is common to use RFF \cite{wang2021eigenvector}. On the left, we show that Adam requires RFF to converge to a reasonable solution. On the right, we observe that LM not only significantly outperforms Adam combined with RFF, but also that incorporating RFF with LM leads to remarkable loss reduction from the very first iterations. In Part (b) of \cref{fig:linear}, we investigate the effect of high convection coefficients $\beta$ in the convection equation as discussed in \cite{krishnapriyan2021characterizing}, where it is shown that a PINN trained with Adam necessitates of curriculum training to achieve meaningful results on such a spectrally biased PDE. However, we show on the left \cref{fig:linear}, part (b), that the LM optimizer can handle higher values of $\beta$, especially when curriculum training is introduced. Remarkably, on the right of \cref{fig:linear}(b), we show that a PINN trained with LM, without any other enhancements, achieves high accuracy with $\beta$ values up to 100. This level of accuracy is not feasible with Adam and curriculum training alone, which, as noted in \cite{krishnapriyan2021characterizing}, manages coefficients only up to 20.
\paragraph{Nonlinear PDEs} Firstly, we consider the case of Burgers' equation, where convergence is achievable even with first-order methods. To address concerns about the additional computational time required by second-order methods, in \cref{fig:nonlinear}, part (a), we display the relative $L^2$ loss over wall time when training on Burgers' equation. All training methods can reach a reasonable solution, however, while the precision of PINNs trained with Adam and L-BFGS is approximately $10^{-3}$, PINNs trained with LM can consistently attain precision around $10^{-5}$ in few iterations and very short GPU time. \cref{fig:nonlinear} also provides a qualitative estimate of the runtime of LM in comparison to Adam and L-BFGS. The intermediate performance of L-BFGS, falling between first- and second-order methods, is explained in Remark \ref{rmk:inexact_method}.
Lastly, a similar outcome can be seen in part (b) of \cref{fig:nonlinear}, where we demonstrate that employing the LM optimizer makes it possible to obtain a reasonable solution even for Navier-Stokes equation in terms of  relative $L^2$ loss over PDE time. Notice that in this case, we employ causality training \cite{causalityisallyouneed} for both Adam and LM.
\begin{figure}
\includegraphics[width=\textwidth]{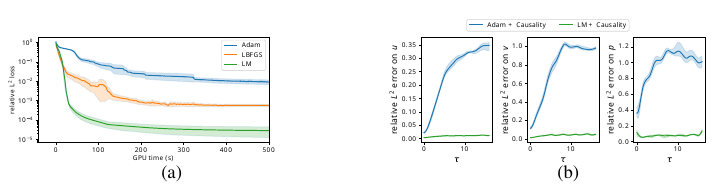}
\vspace{-0.6cm}
\caption{\textbf{(a)} Burgers' equation: mean and standard deviation of the relative $L^2$ loss for various optimizers over wall time (repetitions over 10 independent runs). \textbf{(b)} Navier-Stokes equation: mean and standard deviation of the relative $L^2$ loss over the PDE time $\tau$ for PINNs trained with Adam and LM ($10$ independent runs). Both optimization methods are enhanced with causality training.}
\label{fig:nonlinear}
\end{figure}

\subsection{Limitations and possible solutions}\label{sec:limitations}
The major limitation of our findings is related to scalability. Traditionally, second-order methods have been avoided for machine learning models due to their poor scaling with an increasing number of parameters. However, one can adopt classical PDE solution approaches, such as domain decomposition, to utilize a collection of smaller networks instead of a single large one. Similarly, one can embrace machine learning-based solutions such as ensemble models \cite{haitsiukevich2023improved} or mixture of experts \cite{gormley2019mixture}. We advocate that existing models such as \cite{hu2023augmented, moseley2023finite, wang2022mosaic} could already be strongly enhanced with the usage of second-order methods for training. 
In the scenario where these approaches are impractical, one could also resort to techniques in the field of optimization to enable the scalability of the method.
For medium to large-sized networks, the challenge of storing the matrix $J^T J$ in GPU memory becomes infeasible. This can be addressed through an inexact LM method, which involves solving the equivalent system $\|J\theta-r_\theta\|=0$ using a Krylov subspace iterative method (LSQR or LSMR) \cite{nocedal1999numerical, fong2011lsmr}. These methods only require Jacobian-vector products, which can be efficiently computed through backpropagation.

\section{Conclusion}
In this paper, we conduct an in-depth analysis of PINNs training utilizing the NTK framework. We elucidate the distinction between linear and nonlinear cases, and reveal that even in the optimistic infinite-width limit, favorable outcomes observed with NTK in linear cases do not extend to nonlinear PDEs. Motivated by the NTK anaylsis, we emphasize the significant advantage of employing second-order methods. These seem to mitigate the spectral bias issue and to improve convergence even for challenging nonlinear PDEs. Second-order methods, such as LM, consistently achieve a precision comparable or even better than the state-of-the-art presented in \cite{hao2023pinnacle}. Notably, our findings demonstrate that convergence is attainable without resorting to typical training protocols aimed at enhancing PINNs. However, combining these enhancements with second-order training methods can further improve accuracy while reducing computational time, as demonstrated in our numerical experiments. Accuracy and convergence guarantees are indeed two crucial components for the majority of real-world applications of PDE solvers. In practice, second-order methods may be preferable when the solution contains high frequencies, when the application demands high accuracy, or when the target PDE is nonlinear. A key objective of our paper is to highlight that, despite their scalability challenges, second-order methods could help bridge the gap between black-box machine learning models and PDE solutions in scientific machine learning.

\medskip
\bibliographystyle{abbrvnat}
\bibliography{neurips_2024}

\appendix

\section*{Supplemental Material}
This supplemental material is divided into the following eight appendices.
\begin{itemize}
    \item Appendix A: Details about the NTK Matrix
    \item Appendix B: Standard NTK results for linear PDEs
    \item Appendix C: Proof of \cref{prop:limit_K}
    \item Appendix D: Proof of Proposition \ref{prop:K_non_constant}
    \item Appendix E: Proof of Proposition \ref{prop:H_non_zero}
    \item Appendix F: Proof of \cref{thm:convergence_second_order}
    \item Appendix G: Details about the Numerical Experiments
    \item Appendix H: Further Numerical Experiments
\end{itemize}
In the following we denote with $\|\cdot\|_2$ and $\langle \cdot, \cdot\rangle$ the Euclidean and scalar product on $\R^d$, respectively. The Euclidean ball centered in $x$ with radius $R$ is indicated with $B(x,R)$. We denote with $\|\cdot\|$ the spectral norm of a matrix and with $\mathrm{I}_n$ the identity matrix of dimension $n \times n$.\\
We abbreviate with i.i.d. independently and identically distributed random variables. $\mathbb{E}[X]$ denotes the mean of the random variable $X\in \R^d$, while $\mathrm{Cov[X]}$ is its covariance matrix. Convergence of $X_n$ to $X$ in distribution is indicated with $X_n \overset{\mathcal{D}}{\to} X$, while convergence in probability with $X_n \overset{\mathcal{P}}{\to} X$. $\mathcal{GP}$ denotes a Gaussian Process.\\
The operator $\nabla$ denotes the gradient of a function on $\R^d$, while $\partial_x f(x,y)$ the partial derivative of $f$ with respect to the variable $x$.

\section{Details about the NTK Matrix}\label{appA}

We define the following matrices
\begin{align*}
    \partial_\theta u_\theta(x) =& \begin{bmatrix} 
                                     \partial_{\theta_1} u_{\theta} (x) & \cdots & \partial_{\theta_m} u_{\theta} (x) \\
                                    \end{bmatrix},\\
    \partial_\theta r_\theta(x) =& \begin{bmatrix} 
                                     \partial_{\theta_1} r_{\theta} (x) & \cdots & \partial_{\theta_m} r_{\theta} (x) \\
                                     \end{bmatrix},\\
    \partial_\theta \Phi[u_\theta](x) =& \begin{bmatrix} 
                                     \partial_{\theta_1} \Phi_1[u_{\theta}] (x) & \cdots & \partial_{\theta_m}  \Phi_1[u_{\theta}]  (x) \\
                                     \vdots & \cdots & \vdots \\
                                       \partial_{\theta_1} \Phi_k[u_{\theta}] (x) & \cdots & \partial_{\theta_m}  \Phi_k[u_{\theta}]  (x)\\
                                    \end{bmatrix}.\\
\end{align*}
By $\partial_\theta u_\theta(\x^b)$, $\partial_\theta r_\theta(\x^r)$ and $\partial_\theta \Phi[ u_\theta](\x^r)$ we mean the same matrices as before, calculated in each $x_i^b$ ($x_i^r$ respectively) and stacked vertically, e.g.:
\begin{align*}
    \partial_\theta \Phi[u_\theta](\x^b) =& \begin{bmatrix} 
                                     \partial_\theta \Phi[u_{\theta}] (x_1^b) \\
                                     \vdots  \\
                                       \partial_\theta \Phi[u_{\theta}] (x_{N_b}^b)\\
                                    \end{bmatrix}.
\end{align*}

The only exception is given by:
\begin{equation}\label{eq:letizia}
\begin{split}
    \nabla R(\Phi[u_\theta](\x^r)) = \begin{bmatrix}  
    \nabla R(\Phi[u_\theta](x_1^r)) & \textbf{0} & \cdots & \cdots &  \textbf{0} \\
    \textbf{0} & \nabla R(\Phi[u_\theta](x_2^r)) & \textbf{0} & \cdots & \textbf{0} \\
    \vdots & \cdots & \cdots & \cdots & \vdots\\
    \textbf{0} & \cdots & \cdots & \textbf{0} & \nabla R(\Phi[u_\theta](x_{N_r}^r)) \\
    \end{bmatrix} 
\end{split}.
\end{equation}

While the Hessians have the following structure:
\begin{equation}\label{eq:hessians}
\begin{split}
    H_u(x):=& \begin{bmatrix} 
                 \partial^2_{\theta_1\theta_1} u_{\theta} (x) & \cdots & \partial^2_{\theta_1\theta_m} u_{\theta} (x) \\
                 \vdots & \cdots & \vdots \\
                \partial^2_{\theta_m\theta_1} u_{\theta} (x) & \cdots &  \partial^2_{\theta_m\theta_m} u_{\theta} (x) \\
              \end{bmatrix},\\
    H_r(x):=& \begin{bmatrix} 
                 \partial^2_{\theta_1\theta_1} r_{\theta} (x) & \cdots & \partial^2_{\theta_1\theta_m} r_{\theta} (x) \\
                 \vdots & \cdots & \vdots \\
                \partial^2_{\theta_m\theta_1} r_{\theta} (x) & \cdots &  \partial^2_{\theta_m\theta_m} r_{\theta} (x) \\
              \end{bmatrix},\\
    H_{\Phi_i}(x):=& \begin{bmatrix} 
                 \partial^2_{\theta_1\theta_1} \Phi_i[u_{\theta}] (x) & \cdots & \partial^2_{\theta_1\theta_m}  \Phi_i[u_{\theta}] (x) \\
                 \vdots & \cdots & \vdots \\
                \partial^2_{\theta_m\theta_1}  \Phi_i[u_{\theta}] (x) & \cdots &  \partial^2_{\theta_m\theta_m}  \Phi_i[u_{\theta}] (x) \\
              \end{bmatrix}.
\end{split}
\end{equation}

\section{NTK for linear PDEs}\label{appAB}
First of all, we list here all the assumptions needed on the differential operator $\mathcal{R}$ and the neural network in \eqref{eq:network}.

\begin{assumption}[on $\mathcal{R}$]\label{ass:linear_R}
   The differential operator $\mathcal{R}$ is linear, which implies that $R$ is linear. 
\end{assumption}

\begin{assumption}[on the network]\label{ass:linear_network}
Given the network \eqref{eq:network}, we assume the following properties:
\begin{enumerate}
    \item[(i)] there exists a constant $C>0$ such that all parameters of the network are uniformly bounded for $t\in [0,T]$, 
    \begin{equation*}
        \sup_{t\in [0,T]} ||\theta(t)||_\infty \leq C \,\,\, \text{ with $C$ independent from  $m$.}
    \end{equation*}
    \item[(ii)] there exists a constant $C>0$ such that
    \begin{equation*}
        \begin{split}
            &\int_0^T\left|\sum_{i=1}^{N_b}(u_{\theta(\tau)}(x^b_i)-g(x^b_i) )\right|d\tau \leq C,\\
            &\int_0^T\left|\sum_{i=1}^{N_r}(\Phi[u_{\theta(\tau)}](x^r_i) )\right|d\tau \leq C.
        \end{split}
    \end{equation*}
    \item[(iii)] the activation function $\sigma$ and as well as its derivatives $\sigma^{(i)}$ up to power $k+1$ are smooth and $|\sigma^{(i)}|\leq C$ for $i=1,\ldots, k$, where $\sigma^{(i)}$ denotes the $i$-th order derivative of $\sigma$.
\end{enumerate}
\end{assumption}

In order to present the results, we denote with $H_r$ the Hessian of the residuals $r_\theta$ with respect to the parameters $\theta$. The Hessian plays an important role in Proposition \ref{prop:linear_NTK}, which aims to list all the prior results that can be derived by combining Theorem~4.4 of \cite{wang2022and}, Theorem~3.2 of \cite{liu2020linearity}.

\begin{proposition}\label{prop:linear_NTK}
Consider a fully-connected neural network given by \eqref{eq:network}, under the Assumption \ref{ass:linear_network} on the network and Assumption \ref{ass:linear_R} on the PDE.
For the minimization of the loss function \eqref{eq:loss_collocation} through gradient flow, starting from a Gaussian random initialization $\theta(0)$, it holds that for any $T>0$,
\begin{itemize}
    \item the randomly initialized tangent kernel $K(0)$ converges in probability to a deterministic kernel $\tilde{K}$ as $m \to \infty$;
    \item the Hessian matrix $H_{r}$ of the residuals is sparse and $$||H_r|| = O\left (\frac{1}{\sqrt{m}}\right),$$ hence the spectral norm converges to $0$ as $m\to \infty$;
    
    \item as a consequence, the NTK is nearly constant during training, i.e.
    \begin{equation*}
        \lim_{m\to \infty} \sup_{t\in [0,T]} \|K(t)- K(0)\|_2 = 0;
    \end{equation*}
\end{itemize}
\end{proposition}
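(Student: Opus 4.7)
The plan is to establish the three items in sequence, exploiting the commutation afforded by the linearity of $R$: since $R$ is linear, one can write $r_\theta(x)=\sum_{\alpha} c_\alpha(x)\,\partial_x^\alpha u_\theta(x)$ for fixed coefficients $c_\alpha(x)$, so that $\partial_\theta r_\theta$ and the multi-$x$-derivatives of $\partial_\theta u_\theta$ are interchangeable. This reduces every statement about $r_\theta$ to an analogous statement about $u_\theta$ and its $x$-derivatives, which is the key simplification unavailable in the nonlinear regime.

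For the convergence of $K(0)$, I would write each entry of $K(0)=J(0)J(0)^T$ as an inner product of the form $\langle \partial_x^\alpha \partial_\theta u_\theta(x),\,\partial_x^\beta \partial_\theta u_\theta(x')\rangle$. Thanks to the NTK scaling $1/\sqrt{m}$ and the i.i.d.\ Gaussian initialization of the rows of $(W^0,b^0)$, each such inner product is an average of $m$ i.i.d.\ bounded quantities (the boundedness follows from Assumption \ref{ass:linear_network}(iii)). A standard law-of-large-numbers argument then yields convergence in probability to the deterministic matrix $\tilde K$ whose entries are Gaussian expectations of products of $\sigma^{(i)}$ evaluated on random linear combinations of the input; the contribution of $W^1,b^1$ to each entry trivializes because $u_\theta$ is affine in those parameters.

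For the Hessian bound, I would compute $H_u$ directly from \eqref{eq:network}: because $u_\theta$ is affine in $(W^1,b^1)$, every second derivative inside that block vanishes, while the $(W^0,b^0)$-blocks produce $\sigma''$ terms, and the cross-blocks between the two layers produce $\sigma'$ terms. In every nonzero block an explicit $1/\sqrt{m}$ factor appears from the output-layer weight $W^1$ that is differentiated only once. Combining Assumption \ref{ass:linear_network}(i) and (iii) with a Frobenius-to-spectral bound then gives $\|H_u\|=O(1/\sqrt{m})$. The bound for $H_r$ follows immediately from the commutation above, since $H_r$ is a fixed linear combination of the Hessians of $\partial_x^\alpha u_\theta$, each of which inherits the same sparsity pattern and the same $1/\sqrt{m}$ scaling.

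For the near-constancy of the NTK, I would integrate the gradient flow \eqref{eq:gradient_flow} to write $\theta(t)-\theta(0)= -\int_0^t J(\tau)^T \bigl[u_{\theta(\tau)}(\mathbf{x}^b)-g(\mathbf{x}^b),\,r_{\theta(\tau)}(\mathbf{x}^r)\bigr]^T d\tau$, use the uniform boundedness of $J$ (which is in turn controlled by Assumption \ref{ass:linear_network}(i), (iii)) and Assumption \ref{ass:linear_network}(ii) to obtain that $\sup_{t\in[0,T]}\|\theta(t)-\theta(0)\|$ is bounded uniformly in $m$. A fundamental-theorem-of-calculus argument applied to $t\mapsto J(t)$ together with $\|H_r\|=O(1/\sqrt{m})$ then yields $\sup_{t\in[0,T]}\|J(t)-J(0)\|=O(1/\sqrt{m})$, and hence $\sup_{t\in[0,T]}\|K(t)-K(0)\|\to 0$. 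The main obstacle throughout is keeping precise track of the $1/\sqrt{m}$ factors when $x$-derivatives of arbitrary order $k$ are applied: each such derivative pulls down powers of entries of $W^0$, and only the uniform bound in Assumption \ref{ass:linear_network}(i) together with the uniform bound on $\sigma^{(i)}$ in (iii) prevents these prefactors from destroying the scaling that drives the whole argument.
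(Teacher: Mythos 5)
Your overall strategy coincides with the paper's: the linearity of $R$ lets you commute the differential operator with $\partial_\theta$ and reduce everything to statements about $\Phi[u_\theta]$ (this is Lemma \ref{lem:deterministic_lemma}), the law of large numbers gives the deterministic limit of $K(0)$, an explicit blockwise computation gives the Hessian decay (Lemma \ref{lem:H_Phi}), and the near-constancy follows from the Hessian bound combined with a uniform-in-$m$ bound on the parameter displacement obtained from Assumption \ref{ass:linear_network}(ii) (Lemmas \ref{lem:K_Phi_O(eR)} and \ref{lem:K_Phi_constant}).

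There is one step that fails as written: the ``Frobenius-to-spectral bound'' in your Hessian argument. Each nonzero block of $H_u$ (and of $H_{\Phi_i}$) contains $m$ entries of size $O(1/\sqrt{m})$, so its Frobenius norm is $O(\sqrt{m}\cdot 1/\sqrt{m})=O(1)$, and $\|A\|\leq\|A\|_F$ only yields $\|H_r\|=O(1)$ --- not enough to conclude the second bullet or the third. The sparsity pattern you correctly identify must be used more sharply: because the weight-sharing structure of \eqref{eq:network} makes every nonzero block \emph{diagonal} (the mixed derivative $\partial^2_{\theta_j\theta_l}u_\theta$ vanishes unless $j=l$ index the same hidden neuron), the spectral norm of each block equals its largest entry, which is $O(1/\sqrt{m})$, and summing over the finitely many blocks gives $\|H_r\|=O(1/\sqrt{m})$ as in the proof of Lemma \ref{lem:H_Phi}. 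With that replacement the rest of your argument, including the integration of the gradient flow and the Lipschitz estimate on $t\mapsto J(t)$, goes through exactly as in the paper.
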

\begin{proof}
    The proof can be found in the papers mentioned above or as a special (linear) case in Appendix \ref{appB}-\ref{appD}.
\end{proof}

\section{Proof of Proposition \ref{prop:limit_K}.}\label{appB}
First of all, we derive a result about the behavior of the vector of partial derivatives $\Phi[u]$. The Proposition \ref{prop:convergence_Phi} below is a generalization of  Theorem~4.1 in \cite{wang2022and} for any derivative of order $k$. This means that there are no nonlinearities involved, since these are encoded in the function $R$. Moreover we study the full vector and not each component separately as it is done in \cite{wang2022and}. This is needed in the following proofs.
\begin{proposition}
\label{prop:convergence_Phi}
Consider a fully-connected neural network of one hidden layer as in \eqref{eq:network}, under Assumption \ref{ass:linear_network}. Then, starting from $\theta(0)$ i.i.d. from $\mathcal{N}(0,\mathrm{Id})$, it holds that
    \begin{equation*}
        \Phi[u_{\theta(0)}](x) \overset{\mathcal{D}}{\to} \mathcal{GP}(0,\Sigma(x,x')) \quad \text{for any } x, x'\in \Omega, 
    \end{equation*}
    as $m\to\infty$, where $\mathcal{D}$ means convergence in distribution and $\Sigma$ is explicitly calculated.
\end{proposition}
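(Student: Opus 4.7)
The plan is to rewrite $\Phi[u_{\theta(0)}](x)$ as a normalized sum of $m$ i.i.d.\ vector-valued random variables and then invoke the multivariate Central Limit Theorem. First I would use the smoothness of $\sigma$ (Assumption \ref{ass:linear_network}(iii)) together with the finiteness of the hidden layer to commute partial derivatives in $x$ with the sum over $j$. Consequently, for every multi-index $\alpha$ with $|\alpha|=\ell \leq k$,
\begin{equation*}
\partial_x^\alpha u_{\theta(0)}(x) \;=\; \frac{1}{\sqrt{m}} \sum_{j=1}^m W^1_j \,\bigl(W^0_j\bigr)^\alpha\, \sigma^{(\ell)}\!\bigl(W^0_j\!\cdot x + b^0_j\bigr),
\end{equation*}
where $(W^0_j)^\alpha$ denotes the monomial $\prod_i (W^0_{ji})^{\alpha_i}$ in the entries of the $j$-th row of $W^0$. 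Stacking these expressions over all orders $\ell \leq k$ (and all output coordinates) one obtains
\begin{equation*}
\Phi[u_{\theta(0)}](x) \;=\; \frac{1}{\sqrt{m}} \sum_{j=1}^m Y_j(x) + b^1\, e_0,
\end{equation*}
where $Y_j(x)$ depends only on the parameters of the $j$-th neuron and $e_0$ picks out the undifferentiated entry that absorbs the bias $b^1$.

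Next I would fix any finite collection of points $x_1,\dots,x_n \in \Omega$ and stack the vectors $Y_j(x_i)$ into a single random vector $Z_j$. Under the Gaussian initialization, the $Z_j$ are i.i.d., zero-mean (since each summand carries the independent mean-zero factor $W^1_j$), and have finite covariance: every derivative $\sigma^{(\ell)}$ is uniformly bounded by assumption, and $W^0_j$ has Gaussian moments of every order. The multivariate CLT therefore gives
\begin{equation*}
\frac{1}{\sqrt{m}}\sum_{j=1}^m Z_j \;\overset{\mathcal{D}}{\to}\; \mathcal{N}(0,\Sigma^\star).
\end{equation*}
Combined with the independent Gaussian contribution of $b^1$, this yields joint Gaussian convergence of $\bigl(\Phi[u_{\theta(0)}](x_i)\bigr)_{i=1}^n$, i.e.\ convergence of all finite-dimensional distributions to those of a zero-mean Gaussian process.

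The covariance kernel $\Sigma(x,x')$ is then obtained by a single-neuron expectation: exploiting the independence of $W^1$ from $(W^0,b^0)$ and $\mathbb{E}[W^1_i W^1_j]=\delta_{ij}$, the $(\alpha,\alpha')$ block at orders $\ell=|\alpha|$, $\ell'=|\alpha'|$ is
\begin{equation*}
\Sigma_{\alpha,\alpha'}(x,x') \;=\; \mathbb{E}\!\Bigl[\bigl(W^0\bigr)^{\alpha+\alpha'} \sigma^{(\ell)}\!\bigl(W^0\!\cdot x + b^0\bigr)\, \sigma^{(\ell')}\!\bigl(W^0\!\cdot x' + b^0\bigr)\Bigr] + \mathbf{1}_{\{|\alpha|=|\alpha'|=0\}},
\end{equation*}
where the indicator term reflects the $\mathbb{E}[(b^1)^2]=1$ contribution in the undifferentiated coordinate. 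The main obstacle I anticipate is largely bookkeeping: organizing the multi-index structure of $\Phi$ in the multi-output, mixed-partial setting so that the stacked vectors $Z_j$ are assembled consistently, and tracking how each monomial $(W^0_j)^\alpha$ couples to the corresponding $\sigma^{(\ell)}$. The probabilistic core is standard once this is in place, because the Gaussian weights automatically provide all moments required by the CLT and the Lindeberg condition is trivially satisfied thanks to the uniform bound on $\sigma^{(\ell)}$ in Assumption \ref{ass:linear_network}(iii).
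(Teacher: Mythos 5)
Your proposal is correct and follows essentially the same route as the paper's proof: decompose $\Phi[u_{\theta(0)}](x)$ into a $\tfrac{1}{\sqrt{m}}$-normalized sum of i.i.d.\ per-neuron contributions $W^1_j\,\Phi[\sigma(W^0_j x + b^0_j)]$, apply the multivariate CLT, and identify the covariance kernel via a single-neuron expectation using $\mathbb{E}[W^1_i W^1_j]=\delta_{ij}$. You are in fact slightly more careful than the paper on two points --- checking joint convergence of finite-dimensional distributions over several collocation points (which is what the Gaussian-process claim actually requires) and retaining the $b^1$ contribution in the undifferentiated coordinate, which the paper's proof silently drops --- but these are refinements of the same argument, not a different one.
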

\begin{proof}
    To ease the notation, we omit the initial time $0$ and denote $u_{\theta(0)}$ with $u_{\theta}$. Similarly, all the weights matrices and biases $W^1(0),W^0(0),b^1(0), b^0(0)$ are indicated with $W^1, W^0, b^1, b^0$. Now according to the definition of $\Phi$ and the fact that it is linear, we obtain that
    \[
        \Phi[u_\theta](x)=\frac{1}{\sqrt{m}}W^1\cdot\Phi[\sigma(W^0 x+b^0)]=\frac{1}{\sqrt{m}}\sum_{j=1}^m W^1_j \Phi[\sigma(W^0_j x+b^0_j)].
    \]

    According to our assumptions, $W^1_j \Phi[\sigma(W^0_j x+b^0_j)]$ are i.i.d. random variables. We prove below that their moments are finite, hence by the multidimensional Central Limit theorem (CLT) we can conclude that, for every $x\in \Omega$,
     \[
        \Phi[u_\theta](x)\overset{\mathcal{D}}{\to} \mathcal{N}(0,\Gamma(x)),
    \]
    with covariance matrix: 
    \[
        \Gamma(x)= \mathrm{Cov}_{u,v\sim\mathcal{N}(0,1)}\left[\Phi[\sigma(ux+v)]\right].
    \]

    Now we compute the covariance of the limit gaussian process. In order to do so, we first need to show that $\Phi_i[u_\theta](x)$ are uniformly integrable with respect to $m$ for every $i=1,...,k$. 
   It follows from:
    \begin{align*}
    \sup_m\mathbb{E}[|\Phi_i[u_\theta](x)|^2]&=\sup_m \mathbb{E}\left[\frac{1}{m}\sum_{j,l=1}^m W^1_jW^1_l \Phi_i[\sigma(W^0_j x+b^0_j)]\Phi_i[\sigma(W^0_l x'+b^0_l)]\right]\\
        &=\sup_m\mathbb{E}\left[\frac{1}{m}\sum_{j=0}^m (W^1_j) ^2 \Phi_i[\sigma(W^0_k x+b^0_k)]^2\right]=\mathbb{E}\left[ \Phi_i[\sigma(W^0_j x+b^0_j)]^2\right]\\
        &\leq C^2 \tau^2,
    \end{align*}
    where $C=\max_{1\leq i \leq k} || \sigma^{(i)}||_\infty$ and $\sigma^{(i)}$ indicates the $i$-th order derivative of $\sigma$, while $\tau=\max_{1\leq i \leq k} {\mathbb{E}_{y\sim\mathcal{N}(0,1)}[|y|^i]}<\infty$.

    Now, for any given point $x,x' \in \Omega$ we have that
    \begin{align*}        \Sigma(x,x')_{i,j}&=\lim_{m\to\infty}\mathbb{E}\left[\Phi_i[u_\theta](x)\Phi_j[u_\theta](x')\right]=\\
        &=\lim_{m\to\infty}\mathbb{E}\left[\frac{1}{m}\sum_{l_1,l_2=1}^m W^1_{l_1}W^1_{l_2} \Phi_i[\sigma(W^0_{l_1} x+b^0_{l_1})]\Phi_j[\sigma(W^0_{l_2} x'+b^0_{l_2})]\right]=\\
        &=\lim_{m\to\infty}\mathbb{E}\left[\frac{1}{m}\sum_{l=1}^m (W^1_l) ^2 \, \Phi_i[\sigma(W^0_l x+b^0_l)]\Phi_j[\sigma(W^0_l x'+b^0_l)]\right]=\\
        &=\mathbb{E}_{u,v\sim\mathcal{N}(0,1)}\left[\Phi_i[\sigma(ux+v)]\Phi_j[\sigma(u x'+v)]\right].
    \end{align*}
\end{proof}

\begin{lemma}
    \label{lem:deterministic_lemma}
    Consider a fully-connected neural network of one hidden layer as in \eqref{eq:network}, under Assumption \ref{ass:linear_network}. Let us define
    \begin{align*}
    K_\Phi(0)=&\begin{bmatrix} \partial_\theta u_{\theta(0)} (\x^b) \\ \partial_\theta \Phi[u_{\theta(0)}](\x^r) \end{bmatrix}
    \begin{bmatrix} \partial_\theta u_{\theta(0)} (\x^b)^T & \partial_\theta \Phi[u_{\theta(0)}](\x^r)^T \end{bmatrix},
    \end{align*} 
    where $\Phi$ is the collection of all the partial derivatives of $u$, as in \eqref{eq:Phi_def}, and $\theta(0) \sim \mathcal{N}(0,\mathrm{Id})$ i.i.d.. It follows that $K_\Phi(0)$ converges in probability to a deterministic limiting kernel as $m \to \infty$.
\end{lemma}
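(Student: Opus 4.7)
The plan is to exploit two structural features. First, $\Phi$ is \emph{linear in $u$} and differentiates only in the spatial variable $x$, so it commutes with $\partial_\theta$, yielding
\begin{equation*}
\partial_\theta \Phi[u_\theta](x) = \Phi[\partial_\theta u_\theta](x).
\end{equation*}
Second, after this swap, every entry of the Gram matrix $K_\Phi(0) = J J^T$ becomes a scaled sum of $m$ i.i.d.\ scalar random variables indexed by the hidden units, to which the classical law of large numbers applies.

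Concretely, I would write out the four parameter blocks of the Jacobian $\partial_\theta u_\theta(x)$ using
\begin{equation*}
\partial_{W^1_j} u_\theta(x) = \tfrac{1}{\sqrt m}\sigma(W^0_j x + b^0_j),\qquad
\partial_{W^0_{j,l}} u_\theta(x) = \tfrac{1}{\sqrt m}\,W^1_j\,\sigma'(W^0_j x + b^0_j)\,x_l,
\end{equation*}
and the analogous expressions for the two bias blocks. Applying $\Phi$ componentwise and stacking over the sample points $\x^b, \x^r$, the product $J J^T$ decomposes as a sum over the four parameter blocks, each of which contributes entries of the form
\begin{equation*}
\frac{1}{m}\sum_{j=1}^{m} Z_j(x_\alpha, x_\beta),
\end{equation*}
where $Z_j$ is a bounded continuous function of the i.i.d.\ triples $(W^0_j, b^0_j, W^1_j)$ evaluated on a finite collection of fixed input points. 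The chain rule ensures that differentiating $\sigma(W^0_j x + b^0_j)$ in $x$ merely introduces the bounded higher derivatives $\sigma^{(i)}$ together with polynomial factors in $W^0_j$, so the resulting summands have finite moments of all orders by Assumption~\ref{ass:linear_network}(iii) and the Gaussian integrability of the weights.

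Next, I would apply the scalar law of large numbers entrywise. The uniformly bounded second moment of each $Z_j$ implies that each entry of $K_\Phi(0)$ converges in probability to its deterministic expectation, which plays the role of a limiting covariance kernel evaluated on pairs of derivative orders, in the same spirit as the kernel $\Sigma$ in Proposition~\ref{prop:convergence_Phi}. Finally, since $K_\Phi(0)$ has a fixed, $m$-independent size, entrywise convergence in probability upgrades to convergence in spectral norm via a union bound.

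The main bookkeeping obstacle I anticipate is tracking the powers of $W^1_j$ contributed by each block: the blocks tied to the hidden parameters $(W^0, b^0)$ produce summands containing $(W^1_j)^2$, whose empirical mean converges to $1$, whereas the outer-layer blocks $(W^1, b^1)$ produce summands independent of $W^1_j$. Combining the four block contributions and using Gaussian integration over $(W^0_j, b^0_j)$ yields the explicit deterministic limit, which closes the proof. Note that this argument breaks down precisely when the map from $\Phi$ to the residual is nonlinear, since then the factor $\Lambda_R$ of Theorem~\ref{prop:limit_K} cannot be pulled outside and a nontrivial random component survives in the limit.
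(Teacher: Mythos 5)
Your proposal is correct and follows essentially the same route as the paper's proof: commuting $\Phi$ with $\partial_\theta$ by linearity and Schwarz's theorem, splitting the Gram-matrix entries into the four parameter blocks, and applying the law of large numbers to each normalized sum of i.i.d.\ bounded-moment summands. The extra bookkeeping you flag (the $(W^1_j)^2$ factors in the hidden-layer blocks and the upgrade from entrywise to spectral-norm convergence for a fixed-size matrix) is exactly what the paper leaves implicit when it treats only the $W^1$ block and calls the rest analogous.
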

\begin{proof}
The component  $\partial_\theta u_{\theta(0)}$ is linear, hence it is standard as in \cite{wang2022and}, Lemma 3.1. While the rest of the matrix needs to be generalized to any derivative $\Phi_i$ for $i=1, \ldots, k$. \\
For any $i,j = 1, \ldots,k$ and every $x, x' \in \Omega$ consider each entry
    \begin{align*}
        \partial_{\theta} \Phi_i[u_{\theta(0)}](x) \ \  \partial_{\theta} \Phi_j[u_{\theta(0)}](x')^T &=\sum_{l=1}^{4m} \partial_{\theta_l}\Phi_i[u_{\theta(0)}](x) \,\partial_{\theta_l}\Phi_j[u_{\theta(0)}](x')\\
        &=\sum_{l=1}^{4m} \Phi_i\left[\partial_{\theta_l} u_{\theta(0)}\right](x) \, \Phi_j\left[\partial_{\theta_l} u_{\theta(0)}\right](x')\\
    \end{align*}
    where the second equality follows from Schwarz theorem (because of the smoothness of the derivatives of $u$), and the linearity of the operator $\Phi$.
    This sum has to be split in 4 parts, one for each possible type of $\theta_l$ (in $W^1$, $W^0$, $b^0$ or $b^1$). Here we present the case when $\theta_l=W^1_l$, while the other cases are analogous:
    \begin{align*}
    \sum_{l=1}^{m} \Phi_i\left[\partial_{W^1_l} u_{\theta(0)}\right](x) \, &\Phi_j\left[\partial_{W^1_l} u_{\theta(0)}\right](x') =\\
    &=\frac{1}{m}\sum_{l=1}^{m} \Phi_i\left[\sigma(W^0_l(0) x+ b^0_l(0))\right]\, \Phi_j\left[\sigma(W^0_l(0) x'+ b^0_l(0))\right]\\
        &\overset{\mathcal{P}}{\to} \mathbb{E}_{u,v \sim \mathcal{N}(0,1)}\left[\Phi_i[\sigma(ux+ v)]\, \Phi_j[\sigma(u x'+ v)] \right],
    \end{align*}
 and the limit in probability in the last line comes from the law of Large Numbers.
\end{proof}

\begin{lemma}\label{lem:K_Phi_O(eR)}
Suppose that there exist $R>0$ and $\epsilon>0$ such that $\forall \theta \in B(\theta(0),R)$ it holds
\begin{equation*}
    \begin{split}
        \|H_{u}(\x^b)\|<\epsilon, & \\
        \|H_{\Phi_j}(\x^b)\|<\epsilon & \quad \forall j=1,...,k .
    \end{split}
\end{equation*}
Then $\max_{\theta\in B(\theta_0,R)}\|K_\Phi(t)-K_\Phi(0)\|=O(\epsilon R)$.
\end{lemma}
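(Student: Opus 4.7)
The plan is to exploit the product structure $K_\Phi = J_\Phi J_\Phi^T$ (where $J_\Phi$ is the stacked Jacobian appearing inside the definition of $K_\Phi$) together with the classical telescoping identity
\begin{equation*}
K_\Phi(t)-K_\Phi(0) = \bigl(J_\Phi(t)-J_\Phi(0)\bigr)J_\Phi(t)^T + J_\Phi(0)\bigl(J_\Phi(t)-J_\Phi(0)\bigr)^T,
\end{equation*}
so that controlling $\|K_\Phi(t)-K_\Phi(0)\|$ reduces to controlling the Jacobian increment $\|J_\Phi(t)-J_\Phi(0)\|$ and the total Jacobian norms $\|J_\Phi(0)\|,\|J_\Phi(t)\|$. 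This is the standard ``linearization via Hessian smallness'' argument, and the hypothesis is exactly the Lipschitz-type bound needed for it.

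\textbf{Step 1: Jacobian increment via the Hessian hypothesis.} For any fixed collocation point $x$ and any $\theta\in B(\theta(0),R)$, apply the fundamental theorem of calculus along the straight segment $s\mapsto \theta(0)+s(\theta-\theta(0))$, which lies entirely in $B(\theta(0),R)$ by convexity. This gives
\begin{equation*}
\partial_\theta u_\theta(x)-\partial_\theta u_{\theta(0)}(x) = \int_0^1 H_u(x)\bigl|_{\theta(0)+s(\theta-\theta(0))}\,(\theta-\theta(0))\,ds,
\end{equation*}
and the analogous identity with $u$ replaced by each component $\Phi_j[u]$, producing $H_{\Phi_j}$. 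Taking Euclidean norms and using the assumed bounds $\|H_u\|,\|H_{\Phi_j}\|<\varepsilon$ throughout $B(\theta(0),R)$, each row of $J_\Phi(t)-J_\Phi(0)$ is bounded by $\varepsilon R$. Since the number of rows (i.e.\ $N_b+k N_r$) is fixed and independent of $m$, summing/stacking over all rows yields $\|J_\Phi(t)-J_\Phi(0)\|=O(\varepsilon R)$.

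\textbf{Step 2: Uniform bound on $\|J_\Phi\|$.} By Assumption~\ref{ass:linear_network}, the parameters $\theta(t)$ remain uniformly bounded in $\|\cdot\|_\infty$ and the activation derivatives up to order $k+1$ are uniformly bounded. Hence each entry of $\partial_\theta u_\theta$ and $\partial_\theta \Phi_j[u_\theta]$ can be written explicitly in terms of $W^1,W^0,b^0,b^1$ and bounded derivatives of $\sigma$, with the $1/\sqrt{m}$ NTK scaling, which gives $\|J_\Phi(\theta)\|=O(1)$ uniformly for $\theta\in B(\theta(0),R)$ (this is the same computation underlying Lemma~\ref{lem:deterministic_lemma} and Proposition~\ref{prop:convergence_Phi}).

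\textbf{Step 3: Combine.} Inserting the bounds of Steps~1 and~2 into the telescoping identity yields
\begin{equation*}
\|K_\Phi(t)-K_\Phi(0)\| \le \|J_\Phi(t)-J_\Phi(0)\|\bigl(\|J_\Phi(t)\|+\|J_\Phi(0)\|\bigr) = O(\varepsilon R)\cdot O(1) = O(\varepsilon R),
\end{equation*}
uniformly over $\theta\in B(\theta(0),R)$, giving the claim. The main obstacle I anticipate is bookkeeping rather than conceptual: carefully assembling the per-row, per-collocation-point Hessian estimates into an operator-norm bound for the stacked matrix $J_\Phi(t)-J_\Phi(0)$, and verifying that the $O(1)$ bound for $\|J_\Phi\|$ is indeed uniform on the ball $B(\theta(0),R)$ and not merely at initialization (which is where Assumption~\ref{ass:linear_network}(i) is doing real work, since $R$ is thought of as independent of $m$).
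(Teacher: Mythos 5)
Your proposal is correct and follows essentially the same route as the paper's proof: a mean-value/Lipschitz bound on the Jacobian increment using the assumed Hessian bounds on the ball, followed by the telescoping identity $\|J(t)J(t)^T-J(0)J(0)^T\|\le\|J(t)-J(0)\|(\|J(t)\|+\|J(0)\|)$ and a uniform $O(1)$ bound on $\|J_\Phi\|$ from smoothness of the model. Your Step~2 is in fact slightly more explicit than the paper, which simply asserts boundedness of $\|J\|$ on $B(\theta(0),R)$ by smoothness.
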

\begin{proof}
    Using the properties of the spectral norm, we just need to bound each block of $J(t)$ as follows
    \begin{align*}
        \|J(t)-J(0)\| \leq &\sum_{i=0}^{N_r}\|\partial_\theta\Phi[u_{\theta(t)}] (x^r_i)-\partial_\theta\Phi[u_{\theta(0)}] (x^r_i) \|+\sum_{i=0}^{N_b}\|\partial_\theta u_{\theta(t)} (x^b_i)-\partial_\theta u_{\theta(0)} (x^b_i)\| \\
        \leq &\, k N_r\max_{i, j} \|\partial_\theta\Phi_j[u_{\theta(t)}] (x^r_i)-\partial_\theta\Phi_j[u_{\theta(0)}] (x^r_i) \| \\
        &+N_b \max_i \|\partial_\theta u_{\theta(t)} (x^b_i)-\partial_\theta u_{\theta(0)} (x^b_i) \|\\
        \leq &\, k N_r\max_{i, j} \left( \max_{\theta\in B(\theta(0),R)} \|H_{\Phi_j}(x^r_i)\|\right)\|\theta-\theta_0\| \\
        &+N_b \max_{i} \left( \max_{\theta\in B(\theta(0),R)} \|H_{u}(x^b_i)\|\right)\|\theta-\theta_0\| \\
        \leq &\max(kN_r, N_b)\epsilon R
    \end{align*}
     Hence:
     \begin{align*}
         \|K_\Phi(t)-K_\Phi(0)\|&=\|J(t)J(t)^T-J(0)J(0)^T\|\leq \|J(t)-J(0)\|\cdot(\|J(t)\|+\|J(0)\|)\\
         &\leq \max(kN_r, N_b)\epsilon R (\|J(t)\|+\|J(0)\|)
     \end{align*}
and the last norm is bounded on $B(\theta(0),R)$ by smoothness of the model.
\end{proof}

\begin{lemma}\label{lem:K_Phi_constant}
    Under Assumption \ref{ass:linear_R} on the PDE and Assumption \ref{ass:linear_network} on the network, then $K_\Phi$ is nearly constant during training, i.e.
    \begin{equation*}
        \lim_{m\to\infty} \sup_{t\in[0,T]}\| K_\Phi(t)-K_\Phi(0)\|= 0.
    \end{equation*}
\end{lemma}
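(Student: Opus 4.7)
The plan is to invoke \cref{lem:K_Phi_O(eR)} with an explicit choice of $R$ independent of $m$ and $\epsilon=\epsilon_m=O(1/\sqrt m)$, so that $\epsilon_m R\to 0$ as $m\to\infty$. There are three steps: bounding the relevant Hessians uniformly on a Euclidean ball, controlling the parameter displacement along the gradient flow, and combining the two through \cref{lem:K_Phi_O(eR)}.

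\textbf{Step 1 (Hessian bounds).} Under Assumption \ref{ass:linear_network}(i) and (iii), I would show that there exists $\epsilon_m=O(1/\sqrt m)$ such that $\|H_u(x)\|,\|H_{\Phi_j}(x)\|\leq \epsilon_m$ for all $x$ in the (finite) training sets $\x^b,\x^r$ and for every $\theta$ in any fixed Euclidean ball $B(\theta(0),R)$. This uses the same sparsity argument as the one underlying the claim $\|H_r\|=O(1/\sqrt m)$ in \cref{prop:linear_NTK}. Because the network has a single hidden layer with the NTK rescaling $1/\sqrt m$, second derivatives with respect to two parameters belonging to different hidden neurons vanish, so $H_u$ is block-diagonal with $O(m)$ blocks of bounded size whose entries are $O(1/\sqrt m)$. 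Since $\Phi_j$ is a linear differential operator in $x$, we can commute it with $\partial_\theta$ (Schwarz), hence $H_{\Phi_j}=\Phi_j[H_u]$ inherits the same sparsity structure and the same $O(1/\sqrt m)$ spectral-norm bound, with constants depending only on $R$ and the uniform bounds of $\sigma^{(i)}$.

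\textbf{Step 2 (parameter displacement).} I would bound $\|\theta(t)-\theta(0)\|$ uniformly in $m$ and $t\in[0,T]$, using the gradient flow \eqref{eq:gradient_flow} together with Assumption \ref{ass:linear_network}(ii). In the linear regime $r_\theta$ is a fixed linear combination of components of $\Phi[u_\theta]$, so $\nabla L$ is a sum of terms of the form $\tfrac{1}{N}\sum_i\partial_\theta u_{\theta(\tau)}(x_i)(u_{\theta(\tau)}(x_i)-g(x_i))$ and $\tfrac{1}{N}\sum_i\partial_\theta\Phi_j[u_{\theta(\tau)}](x_i)\,c_{ji}(\tau)$. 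Integrating component-wise and distinguishing the $O(m)$ hidden-layer parameters (whose Jacobian entries are $O(1/\sqrt m)$ thanks to the $1/\sqrt m$ prefactor and Assumption \ref{ass:linear_network}(iii)) from the $O(1)$ outer parameters gives, after using Assumption \ref{ass:linear_network}(ii),
\begin{equation*}
\|\theta(t)-\theta(0)\|_2^2 \;=\;\sum_l|\theta_l(t)-\theta_l(0)|^2 \;\leq\; C\Big(m\cdot\tfrac{1}{m}+O(1)\Big)\;\leq\;R^2,
\end{equation*}
for some $R$ independent of $m$ and of $t\in[0,T]$.

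\textbf{Step 3 (conclusion).} Combining Steps 1 and 2, for every $t\in[0,T]$ we have $\theta(t)\in B(\theta(0),R)$ and the Hessian hypothesis of \cref{lem:K_Phi_O(eR)} holds with $\epsilon=\epsilon_m$. That lemma then yields
\begin{equation*}
\sup_{t\in[0,T]}\|K_\Phi(t)-K_\Phi(0)\|\;=\;O(\epsilon_m R)\;=\;O(1/\sqrt m),
\end{equation*}
which vanishes as $m\to\infty$, proving the claim.

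The main obstacle I expect is Step 2. Assumption \ref{ass:linear_network}(i) only gives an $\ell^\infty$ bound on $\theta(t)$, which translates to an $\ell^2$ bound growing like $\sqrt m$ and is too weak to combine with the $O(1/\sqrt m)$ Hessian estimate. The delicate point is therefore to exploit the NTK scaling of the Jacobian together with the integral bound in Assumption \ref{ass:linear_network}(ii) to show that the trajectory actually moves by only $O(1)$ in Euclidean norm, which requires carefully bookkeeping the contributions of the $O(m)$ hidden-layer parameters versus the $O(1)$ outer parameters.
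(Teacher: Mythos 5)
Your proposal is correct and follows essentially the same route as the paper, whose one-line proof simply combines Lemma \ref{lem:K_Phi_O(eR)} (perturbation of $K_\Phi$ controlled by $\epsilon R$) with Lemma \ref{lem:H_Phi} (the $O(1/\sqrt{m})$ Hessian decay) --- exactly your Steps 1 and 3. Your Step 2, the uniform-in-$m$ bound $\|\theta(t)-\theta(0)\|_2=O(1)$ obtained by integrating the gradient flow and bookkeeping the $1/\sqrt{m}$ scaling of the hidden-layer gradient components against Assumption \ref{ass:linear_network}(ii), is the one ingredient the paper leaves entirely implicit, and you are right that it is the delicate point: without a radius $R$ independent of $m$, Lemma \ref{lem:K_Phi_O(eR)} cannot be invoked to give a vanishing bound.
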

\begin{proof}
    The statement follows by combining Lemma \ref{lem:K_Phi_O(eR)} and Lemma \ref{lem:H_Phi}.
\end{proof}

Now we are in position to prove \cref{prop:limit_K}:
\begin{proof}(of \cref{prop:limit_K})
By using the chain rule on the residual term, we can explicitly compute:
\begin{align*}
   K&(0)=\begin{bmatrix} \partial_\theta u_\theta (\x^b) \\ \partial_\theta r_\theta(\x^r) \end{bmatrix}
        \begin{bmatrix} \partial_\theta u_\theta (\x^b)^T & \partial_\theta r_\theta (\x^r)^T \end{bmatrix}=\\
        =&\begin{bmatrix} \partial_\theta u_\theta (\x^b) \\ \nabla R(\Phi[u_\theta](\x^r))\partial_\theta \Phi[u_\theta] (\x^r) \end{bmatrix}
        \begin{bmatrix} \partial_\theta u_\theta(\x^b)^T &  \nabla R(\Phi[u_\theta](\x^r))\partial_\theta \Phi[u_\theta] (\x^r)^T \end{bmatrix}=\\
        =& \underbrace{\begin{bmatrix} \mathrm{Id} & 0 \\ 0 & \nabla R(\Phi[u_\theta](\x^r)) \end{bmatrix}}_{\Lambda_R(0)}
        \underbrace{\begin{bmatrix} \partial_\theta u_\theta (\x^b) \\ \partial_\theta \Phi[u_\theta] (\x^r) \end{bmatrix}
        \begin{bmatrix} \partial_\theta u_\theta (\x^b)^T & \partial_\theta\Phi[u_\theta] (\x^r)^T \end{bmatrix} }_{K_\Phi(0)}
        \underbrace{\begin{bmatrix} \mathrm{Id} & 0 \\ 0 & \nabla R (\Phi[u_\theta](\x^r))^T \end{bmatrix}}_{\Lambda_R(0)^T},
\end{align*}
where we have denoted $\theta(0)$ with $\theta$ and omitted the initial time step and $\nabla R(\Phi[u_\theta](\x^r))$ is defined in \eqref{eq:letizia}. Let us first observe that the linear part, i.e. $K_\Phi(0)$, converges in probability to a deterministic limit by Lemma \ref{lem:deterministic_lemma}.
Moreover, $\Phi[u_{\theta(0)}]$ converges in distribution to a gaussian process by Proposition \ref{prop:convergence_Phi}.
Regarding the nonlinear part denoted with $\Lambda_R(0)$, we know by assumption that $\nabla R$ is a continuous function, hence we can apply the Continuous Mapping Theorem and conclude that
\begin{equation*}
    \nabla R(\Phi[u_\theta](x))\overset{\mathcal{D}}{\to} \nabla R \left(\mathcal{GP}(0,\Sigma(x,x'))\right) \quad \text{for } x, x' \in \Omega.
\end{equation*}
From this, the convergence of $K(0)$ follows by Slutsky's theorem.
\end{proof}

\section{Proof of  
Proposition \ref{prop:K_non_constant}}\label{appC}
\begin{proof}
Recall that we denote with $K(t)$ the NTK obtained with $\theta(t)$, evolving according to the gradient flow \eqref{eq:gradient_flow}. Similarly, $K(0)$ is the NTK at initialization, i.e. with $\theta(0) \sim \mathcal{N}(0,\mathrm{Id}_{m})$. We can rewrite the kernels in terms of their linear and nonlinear part as we did for the proof of \cref{prop:limit_K}, and obtain
 \begin{align*}
     \lim_{m\to \infty} \sup_{t\in [0,T]}\|K(t)- K(0)\|& \geq  \lim_{m\to \infty} \|K(t_m)- K(0)\| \\
     &=\lim_{m\to \infty}  \| \Lambda_R(t_m)K_\Phi(t_m) \Lambda_R(t_m)^T  - \Lambda_R(0)K_\Phi(0) \Lambda_R(0)^T \| \\
     &  \geq \lim_{m\to \infty} \big|\| \Lambda_R(t_m) K_\Phi(0) \Lambda_R(t_m)^T-\Lambda_R(0)K_\Phi(0)\Lambda_R(0)^T\|
    \\
    & \quad \quad \quad \quad - \| \Lambda_R(t_m)[K_\Phi(t)- K_\Phi(0)]\Lambda_R(t_m)^T\| \big|,
 \end{align*}
where the last is obtained by applying the inverse triangular inequality, after summing and subtracting the needed terms. Moreover, by 
considering that $\sup_{t\in[0,T]}\|K_\Phi(t)-K_\Phi(0)\| \to 0$ as $m \to \infty$ by Lemma \ref{lem:K_Phi_constant}, we obtain that
\begin{align*}
     \lim_{m\to \infty} \sup_{t\in [0,T]} \|K(t)-K(0)\| &\geq \lim_{m\to \infty} \| \Lambda_R(t_m) K_\Phi(0) \Lambda_R(t_m)^T-\Lambda_R(0)K_\Phi(0)\Lambda_R(0)^T\| =\\
    & =\lim_{m\to \infty} \Bigg\| \begin{bmatrix} \mathrm{Id} & 0\\
    0 &\nabla R(\Phi[u(t_m)]) \end{bmatrix}K_\Phi(0) \begin{bmatrix} \mathrm{Id} & 0 \\
    0 &\nabla R(\Phi[u(t_m)]) \end{bmatrix}^T \\ & \qquad \quad - \begin{bmatrix} \mathrm{Id} & 0\\
    0 &\nabla R(\Phi[u_{\theta(0)}]) \end{bmatrix}K_\Phi(0) \begin{bmatrix} \mathrm{Id} & 0\\
    0 &\nabla R(\Phi[u_{\theta(0)}])  \end{bmatrix}^T\Bigg\|.
\end{align*}
Observe that \eqref{eq:ass_training} implies that $\Phi[u(t_m)]\to\Phi[u^\star]$,  hence $\nabla R(\Phi[u_{\theta(t)]}) \to \nabla R(\Phi[u^\star])$ as $m \to \infty$ by continuity of $\nabla R$. Combining this and Lemma \ref{lem:deterministic_lemma}, we find
\begin{align*}
\lim_{m\to \infty} &\Bigg\| \begin{bmatrix} Id & 0\\ 0 &\nabla R(\Phi[u(t_m)]) \end{bmatrix}K_\Phi(0) \begin{bmatrix} \mathrm{Id} & 0\\
    0 &\nabla R(\Phi[u(t_m)]) \end{bmatrix}^T \\
&-\begin{bmatrix} \mathrm{Id} & 0\\
    0 &\nabla R(\Phi[u_{\theta(0)}]) \end{bmatrix}K_\Phi(0) \begin{bmatrix} \mathrm{Id} & 0\\
    0 &\nabla R(\Phi[u_{\theta(0)}])  \end{bmatrix}^T\Bigg\| \\
    =&\Bigg \| \begin{bmatrix} \mathrm{Id} & 0\\
    0 &\nabla R(\Phi[u^\star]) \end{bmatrix}K_\Phi(0) \begin{bmatrix} \mathrm{Id} & 0\\
    0 &\nabla R(\Phi[u^\star]) \end{bmatrix}^T \\
    &-\begin{bmatrix} \mathrm{Id} & 0\\
    0 &\nabla R(\Phi[u_{\theta(0)}]) \end{bmatrix}K_\Phi(0) \begin{bmatrix} \mathrm{Id} & 0\\
    0 &\nabla R(\Phi[u_{\theta(0)}])  \end{bmatrix}^T\Bigg \|.
\end{align*}
Finally, to prove our statement, we just need to show that the matrix above is not $0$ almost surely, or at least one of its components. Let us fix a collocation point $x\in \Omega$ and let us define the function $f:\mathbb{R}^k \to \mathbb{R}$:
\begin{equation}\label{eq:analytic}
    f(w):=\nabla R(\Phi[u^\star](x))  K_\Phi(0)_{(x,x)} \nabla R(\Phi[u^\star](x))^T-\nabla R(w) K_\Phi(0)_{(x,x)} \nabla R(w)^T,
\end{equation}
where $K_\Phi(0)_{(x,x)}$ denotes the kernel evaluation at a fixed collocation point. The first term on the right hand side of \eqref{eq:analytic} is a deterministic vector, so $f$ is a well defined deterministic analytic function. Moreover, if $R$ is nonlinear, $f$ is not identically zero. \\By the properties of analytic functions we can conclude that $\textit{Leb}(\{w\in\mathbb{R}^k| f(w)=0\})=0$, where $\textit{Leb}$ denotes the Lebesgue measure.
Notice that $\Phi[u_{\theta(0)}](x)\sim \mathcal{N}(0,\Sigma(x))$ in the infinite-width limit as proven in Proposition \ref{prop:convergence_Phi} and a consequence of that proof is that $\Sigma(x)$ is not singular. This implies that
$$\mathbb{P}(f(\Phi[u_{\theta(0)}](x))=0)=0.$$ 
\end{proof}

\section{Proof of  Proposition \ref{prop:H_non_zero}}\label{appD}
We present here some preparatory results.
\begin{lemma}\label{lem:H_Phi} 
For any $i=1...k$ and any $x\in\Omega$, the Hessian $H_{\Phi_i}(x)$ as defined in \eqref{eq:hessians} is such that
$$\|H_{\Phi_i}(x)\| = O(\frac{1}{\sqrt{m}}).$$ 
\end{lemma}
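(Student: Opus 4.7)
The key observation is that the single-hidden-layer architecture \eqref{eq:network} decouples contributions across neurons, and that the operator $\Phi_i$ acts only on the $x$-variable, so by Assumption \ref{ass:linear_network}(iii) and Schwarz's theorem it commutes with derivatives in $\theta$:
\[
\partial^2_{\theta_l \theta_p}\Phi_i[u_\theta](x) \;=\; \Phi_i\bigl[\partial^2_{\theta_l \theta_p} u_\theta\bigr](x).
\]
The plan is to use this commutation to reduce the problem to analyzing the Hessian of a single neuron's contribution, and then to exploit the $1/\sqrt{m}$ normalization of the outer layer.

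Concretely, writing $u_\theta(x) = \tfrac{1}{\sqrt{m}}\sum_{j=1}^m W^1_j\sigma(W^0_{j,\cdot}x + b^0_j) + b^1$ and grouping the parameters neuron by neuron as $\theta_j := (W^1_j, W^0_{j,\cdot}, b^0_j)$, one has
\[
\Phi_i[u_\theta](x) \;=\; \frac{1}{\sqrt{m}}\sum_{j=1}^m \psi_{i,j}(x;\theta_j) \;+\; \delta_{i,1}\,b^1,
\]
where $\psi_{i,j}(x;\theta_j)$ depends only on the $j$-th neuron's parameters, since $\Phi_i$ acts in $x$ alone and the network sum splits neuron by neuron. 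Therefore $\partial^2_{\theta_l \theta_p}\Phi_i[u_\theta](x) = 0$ whenever $\theta_l$ and $\theta_p$ come from distinct neurons, and every second derivative involving $b^1$ vanishes (for $i=1$ because $b^1$ enters affinely, and for $i\geq 2$ because $\Phi_i$ annihilates it). After a suitable permutation of rows and columns, $H_{\Phi_i}(x)$ is block-diagonal with $m$ blocks $B_j$ of fixed size independent of $m$, each equal to $\tfrac{1}{\sqrt{m}}$ times the Hessian of $\psi_{i,j}(x;\cdot)$ with respect to $\theta_j$.

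It remains to bound each block. By the chain and product rules applied inside $\Phi_i$, every entry of $B_j$ can be written as $\tfrac{1}{\sqrt{m}}$ times a finite sum of terms of the form $\sigma^{(r)}(W^0_{j,\cdot} x + b^0_j)\,P(x, W^0_{j,\cdot}, W^1_j)$, with $r\leq k+1$ and $P$ a polynomial of fixed degree. Under Assumption \ref{ass:linear_network}, $|\sigma^{(r)}|$ is bounded and the single-neuron parameters are uniformly bounded, so $|(B_j)_{ab}|\leq C/\sqrt{m}$ for a constant $C$ depending on $x$ and $k$ but not on $m$ or $j$. Since $B_j$ has dimension independent of $m$, $\|B_j\|\leq \|B_j\|_F = O(1/\sqrt{m})$, and block-diagonality yields $\|H_{\Phi_i}(x)\| = \max_j\|B_j\| = O(1/\sqrt{m})$. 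The main place where care is needed is in the bookkeeping of the chain rule inside $\Phi_i$: one must verify that every term produced depends only on parameters of a single neuron and that no hidden factor grows with $m$; once this is settled, the spectral norm estimate is immediate.
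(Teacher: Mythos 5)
Your proof is correct and follows essentially the same route as the paper's: both commute $\Phi_i$ with $\partial^2_{\theta}$ via linearity of $\Phi_i$ and Schwarz's theorem, observe that second derivatives mixing distinct neurons (or involving $b^1$) vanish so the Hessian is sparse, and extract the $1/\sqrt{m}$ factor from the outer-layer scaling together with the boundedness of $\sigma^{(r)}$ and of the parameters. The only difference is bookkeeping: you permute the Hessian into $m$ fixed-size per-neuron blocks and take the maximum block norm, whereas the paper keeps nine $m\times m$ blocks indexed by parameter type, each diagonal, and sums their spectral norms --- the resulting bound is the same.
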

\begin{proof}
Recall that
\begin{equation*}
    \left(H_{\Phi_i}(x)\right)_{jl}=\partial^2_{\theta_j \theta_l}\Phi_i[u_\theta](x), \quad \text{where } l,j=1, \ldots, m.
\end{equation*}
By the linearity of the operator $\Phi_i$ and the smoothness of the activation function as in Assumption \ref{ass:linear_network}, it holds that
\begin{equation*}
\partial^2_{\theta_j\theta_l}\Phi_i[u_\theta] = \Phi_i\left[\partial^2_{\theta_j\theta_l} u_\theta \right]. 
\end{equation*}
For a specific choice, e.g. first parameter is $\theta_j=W^1_j$ and the second is $\theta_l = W^0_l$, it holds that
\begin{equation}\label{eq:aux2}
    \left|\partial_{W^1_j W^0_l}^2 \Phi_i[u_\theta](x)\right| = \left|\Phi_i\left[\partial^2_{W^1_j W^0_l} u_\theta \right]\right|=\left|\frac{1}{\sqrt{m}}\Phi_i[\sigma'(W^0_lx+b^0_l)x] \right|\mathbf{1}_{l=j}\leq C\frac{1}{\sqrt{m}},\\
\end{equation}
where the last inequality follow from Assumption \ref{ass:linear_R}, Assumption \ref{ass:linear_network} and the boundedness of the domain $\Omega$.\\
Since the calculations of \eqref{eq:aux2} are similar for every combination of parameters $W^1,W^0,b^0$, we do not report them here. Furthermore, we notice that the derivatives involving $b^1$ are zeros and hence we obtain that $H_{\Phi_i}$ is composed by 9 blocks ($3 \times 3$ combinations of parameters). Each block is a diagonal matrix, whose elements are bounded by $C\frac{1}{\sqrt{m}}$. 
By considering that the spectral norm of a diagonal matrix is equal to the maximum of its components, we can bound the spectral norm of each block by $C\frac{1}{\sqrt{m}}$. Moreover the spectral norm of a matrix can be bounded by the sum of the spectral norm of its blocks, hence:
\begin{equation*}
    \|H_{\Phi_i}(x)\|\leq 9C \frac{1}{\sqrt{m}} = O(\frac{1}{\sqrt{m}}).
\end{equation*}
\end{proof}

We can now prove Proposition \ref{prop:H_non_zero}.
\begin{proof}
In the nonlinear case the Hessian of the residuals is 
\begin{align*}
(H_{r}(x))_{j,l} =& \partial^2_{\theta_l \theta_j} r_\theta(x) = \partial_{\theta_l}(\nabla R(\Phi[u_\theta](x)])\partial{\theta_j}u_\theta(x) =  \\
=& \underbrace{\langle \partial_{\theta_l} \Phi[u_\theta](x), \nabla^2 R(\Phi[u_\theta](x)) \partial_{\theta_j}\Phi[u_\theta](x) \rangle}_{A_{ij}} + \underbrace{\nabla R (\Phi [u_\theta](x)) H_\Phi (x)}_{B_{ij}}.
 \end{align*}
for every collocation point $x\in \Omega$. The matrix $H_\Phi$ is defined in \eqref{eq:hessians}. Moreover,  Lemma \ref{lem:H_Phi} provides that the spectral norm of $B$ goes to $0$ in the infinite-width limit. Moreover, by making use of the inverse triangular inequality, we obtain that for any $x\in \Omega$, it holds
\begin{align*}
    \lim_{m\to\infty} \|H_r(x)\| \geq \lim_{m\to\infty} |\|A\|- \|B\|| =\lim_{m\to\infty}\|A\|.
\end{align*}
According to the definition of spectral norm, we have that
\begin{equation*}
    \lim_{m\to\infty}\|A\| = \lim_{m\to\infty}  \max_{\|z\|_2\leq1}\|Az\|_2 \geq \lim_{m\to\infty} \|A\Bar{z}\|_2, 
\end{equation*}
where 
$\Bar{z}:=\begin{bmatrix}
    \frac{1}{\sqrt{m}}& \frac{1}{\sqrt{m}} & \ldots &\frac{1}{\sqrt{m}}
\end{bmatrix}.$ Let us now focus on the term $\|A\Bar{z}\|_2$. By using some standard inequalities and taking advantage of the fact that each entry of $\Bar{z}$ is $\frac{1}{\sqrt{m}}$, we obtain that
\begin{align*}
    ||Az||_2&\geq \frac{1}{\sqrt{m}} ||Az||_1 \geq  \frac{1}{\sqrt{m}} \sum_{i=1}^m (Az)_i=\frac{1}{m} \sum_{i,j=1}^m A_{ij}\\
    &=\frac{1}{m} \sum^m_{i,j=1} \langle \partial_{\theta_i} \Phi[u_\theta](x), \nabla^2 R(\Phi[u_\theta](x)) \partial_{\theta_j}\Phi[u_\theta](x) \rangle =\\
    &= \left\langle \frac{1}{\sqrt{m}} \sum^m_{i=1} \partial_{\theta_i} \Phi[u_\theta](x), \nabla^2R(\Phi[u_\theta](x)) \frac{1}{\sqrt{m}} \sum^m_{j=1} \partial_{\theta_j}\Phi[u_\theta](x) \right\rangle
\end{align*}
Without loss of generality, we can restrict our focus to $\theta_i=W^1_i$ and $\theta_j=W^1_j$, since the spectral norm of a matrix is greater or equal then the norm of its submatrix, and study the term 
\begin{equation}\label{eq:aux}
\begin{split}
    \lim_{m\to\infty}  \frac{1}{\sqrt{m}} \sum_{i=1}^m \partial_{\theta_i} \Phi[u_\theta](x) &=
    \lim_{m\to\infty}  \frac{1}{m} \sum_{i=1}^m  \Phi[\sigma(W^0_i \cdot +b^0_i)](x) =\\
    &=\mathbb{E}_{u,v\sim\mathcal{N}(0,1)}\left[ \Phi[\sigma(u \cdot +v)](x)\right]=:w
\end{split}
\end{equation}
by the law of large numbers.  In particular, $w$ is deterministic. Notice that here we have considered a generic $\theta$ since, according to Lemma \ref{lem:K_Phi_constant}, $\partial_{\theta_i}\Phi$ is constant. By combining this result with the previous one, we obtain that
\begin{align*}
    \lim_{m\to\infty} \| H_r(x)\|\geq w^T \nabla^2 R(\Phi[u_\theta](x)) w\geq \tilde{c}
\end{align*}
where $\tilde{c}$ is a deterministic constant that does not depend on $m$, but only on the value of $\nabla^2R$ (which is constant because $R$ is a second-order polynomial) and on the vector $w$ defined in \eqref{eq:aux}.
\end{proof}

\section{Proof of \cref{thm:convergence_second_order}}\label{appE}
\begin{proof}
The gradient flow equation in case of Gauss-Newton methods has been defined in \eqref{eq:gf_quasinewton} for $J(t) \in \R^{n \times p}$ where $t\in[0,T]$. It follows that 
\begin{equation*}
\begin{bmatrix}
    \partial_t u_\theta(t)\\\partial_t r_\theta(t)
\end{bmatrix} = \begin{bmatrix}
    \partial_\theta u_{\theta(t)}\\ \partial_\theta r_{\theta(t)}
\end{bmatrix} \partial_t\theta(t)
    = J(t) \partial_t\theta(t) = -J(t)(J^T(t)J(t))^{\dagger}J^T \begin{bmatrix}
        u_{\theta(t)}\\ r_{\theta(t)}
    \end{bmatrix},
\end{equation*}
where the last equality comes from plugging in \eqref{eq:gf_quasinewton} into the equation. Now, let us consider the case when $p>>n$, then the singular value decomposition of $J(t)$ is as follows
\begin{equation*}
    J(t) = U \underbrace{\begin{bmatrix}
        \tilde{\Sigma}_n & 0_{p-n}
    \end{bmatrix}}_\Sigma V^T,
\end{equation*}
where $U \in \R^{n\times n},\Sigma\in \R^{n\times p}, V\in \R^{p \times p}$ and $\tilde{\Sigma}_n\in\R^{n\times n}$ is a diagonal matrix with elements given by the square roots of the eigenvalues of the NTK. We drop the dependence on time $t$ of $U,\Sigma$ and $V$ to ease the notation. Let us now study the term 
\begin{align*}
  J(t)(J^T(t)J(t))^\dagger J^T(t) &= U \Sigma^T V^T( V \Sigma^TU^TU\Sigma V^T)^\dagger V\Sigma^TU^T\\
  & = U \Sigma V^T V(\Sigma^T\Sigma)^\dagger V^TV \Sigma^TU^T \\
  &= U \Sigma (\Sigma^T\Sigma)^\dagger\Sigma^T U^T\\
  &= U \begin{bmatrix}
      \Tilde{\Sigma}_n & 0_{p-n}\end{bmatrix}\left( \begin{bmatrix}
      \Tilde{\Sigma}_n \\ 0_{p-n}\end{bmatrix}\begin{bmatrix}
      \Tilde{\Sigma}_n & 0_{p-n}\end{bmatrix}\right)^\dagger\begin{bmatrix}
      \Tilde{\Sigma}_n \\ 0_{p-n}\end{bmatrix} U^T\\
  &= U \begin{bmatrix}
      \Tilde{\Sigma}_n & 0_{p-n}\end{bmatrix} \begin{bmatrix}
      \Tilde{\Sigma}^2_n & 0_{p-n}
      \\0_{p-n} & 0_{n}\end{bmatrix}^\dagger \begin{bmatrix}
      \Tilde{\Sigma}_n \\ 0_{p-n}\end{bmatrix} U^T\\
      &= U  \Tilde{\Sigma}_n (\Tilde{\Sigma}_n^2)^\dagger \Tilde{\Sigma}_n  U^T\\
      &= UDU^T 
\end{align*}
where $D$ is obtained from $\Tilde{\Sigma}_n$ by replacing the non-zero components with $1$.
In particular we can rewrite the Gauss-Newton flow as:
\begin{equation*}
 \begin{bmatrix}
    \partial_t u_{\theta(t)} \\ \partial_t r_{\theta(t)} 
    \end{bmatrix} = - UDU^T \begin{bmatrix}
        u_{\theta(t)}\\r_{\theta(t)}
    \end{bmatrix}.
\end{equation*}
Notice that it has the same form of the gradient flow in Lemma \ref{lem:grad_flow} but the Neural Tangent Kernel is replace by a matrix with non-zeroes eigenvalues $1$. This can be translated as: second-order optimizers are almost spectrally unbiased. 
Moreover if $J(t)$ stays full rank during the training, we can obtain the result of convergence regardless of the singular values of $J(t)$, i.e.:
\begin{equation*}
 \begin{bmatrix}
    \partial_t u_{\theta(t)} \\ \partial_t r_{\theta(t)} 
    \end{bmatrix} = - \begin{bmatrix}
        u_{\theta(t)}\\r_{\theta(t)}
    \end{bmatrix}.
\end{equation*}.
\end{proof}

\section{Details about the Numerical Experiments}\label{appF}

\subsection{The LM Algorithm}
In the following, we provide a more detailed description of the version of the Levenberg-Marquardt algorithm along with its pseudocode and the details of the experiments whom results are shown in \cref{sec:numerics}. 

The main difference between the Levenberg-Marquardt algorithm and other Quasi-Newton method is that general Quasi-Newton methods are line-search approaches, while LM is a trust region approach. In practice, line search approaches determine a descent direction of the loss function and thereinafter determine a suitable step size in such direction. On the other hand, a trust region method determines an area where the solution lies and computes the optimal step. If this step does not provide enough improvement in the objective function, the search area is reduced and the search is performed once more. We refer to \cite{nocedal1999numerical} for a thorough description of trust region and line search methods.

In the following part, we drop the dependence on training time as a continuous function and identify $f(t_k) = f_k$ for some discrete time $t_k$. As already mentioned in \cref{sec:numerics}, the update step $v_k$ of the LM algorithm is computed follows:
\begin{equation}\label{eq:LMstep_appendix}
        v_k = - \left[J_k ^TJ_k + \lambda D_k\right]^{-1} \nabla L(\theta_k),
    \end{equation}
where $D_k$ is a diagonal matrix of size $n\times n$. In the classical LM algorithm, this matrix $D_k$ is given by the identity matrix. Another viable alternative recommended in \cite{fletcher1971modified} is to use the diagonal of $J_k ^TJ_k$. For our model, we choose $D_k$ to be simply the identity matrix, which appears to be more stable when $J_k ^T J_k$ is singular.

Another typical modification to the Levenberg-Marquardt algorithm is the introduction of the geodesic acceleration \cite{transtrum2012improvements}.
\begin{equation}\label{eq:LMaccel_appendix}
        a_k = - \left[J_k ^T J_k + \lambda D_k\right]^{-1} v_k H_r v_k.
    \end{equation}
The goal of the geodesic acceleration is to introduce a component which does consider all the components of the Hessian of the loss when the residuals are not small and when the Hessian of the residuals is not negligible.

Moreover, at every iteration, one has to specify a criterion $C_k$ whose objective is to evaluate the relative improvement of the model parameterized by $\theta_k$ with respect to the update step $v_k$. The criterion depends on the modification of the LM algorithm chosen.  For our algorithm we use the same condition as \cite{gavin2019levenberg} i.e. $C_k < toll$ where $C_k$ is defined as
\begin{equation}\label{eq:criterion_appendix}
        C_k =\frac{L(\theta_k)^2 - L(\theta_k + v_k)^2}{\langle v_k, \lambda_k D_kv_k + \nabla L (\theta_k) \rangle}.
    \end{equation}

We provide in \cref{alg:LM} the pseudocode of the modified LM algorithm that we chose for our numerical experiments, inspired by the implementation of \cite{gavin2019levenberg} and modifying it by adding the component of the geodesic acceleration.

\begin{algorithm}[tbh]
   \caption{Modified Levenberg-Marquardt Algorithm}
   \label{alg:LM}
\begin{algorithmic}
   \STATE {\bfseries Input:} Maximum region area $\Lambda >0$, Region Radius $0 < \lambda_0 < \Lambda$, Tollerance $tol \in [0, \frac{1}{4})$, $\alpha\in [0,1)$
   \FOR{$k=0, 1, 2, \dots$}
   \STATE Compute $v_k$ as in \cref{eq:LMstep_appendix}
   \STATE Compute criterion $C_k$ as in \cref{eq:criterion_appendix}
   \WHILE{$C_k < tol$}
   \STATE $\lambda = \min(2\lambda, \Lambda)$
   \STATE Compute $v_k$ with the new value of $\lambda$
   \STATE Compute criterion $C_k$ as in \cref{eq:criterion_appendix}
   \ENDWHILE
   \STATE $\theta_{k+1} = \theta_k + v_k$
   \STATE $\lambda_{k+1} = \max(\frac{1}{3}\lambda, \Lambda^{-1})$
   \STATE Compute $a_k$ as in \cref{eq:LMaccel_appendix}
   \IF{$2||a_k||\leq \alpha ||v_k||$}
   \STATE $\theta_{k+1} = \theta_{k+1} + \frac{1}{2}a_k$
   \ENDIF
   \ENDFOR
\end{algorithmic}
\end{algorithm}

The main focus of the Levenberg-Marquardt method is to decide the size of the trust region. In practice, at every iteration, one wants to find a better solution and afterwards reduce the size of the trust region. When this does not happen, the solution is to enlarge the trust region in order to look for a better solution. In our method we choose to include the region search as part of the inner loop, as for line search approaches. This means that the iteration itself can be slower, but more accurate, which is why we include in the numerical evaluation also the computational time.

\subsection{Poisson Equation}
The Poisson equation that we choose for our study is a monodimensional instance of the PDE defined in \cite{wang2021eigenvector} for $x  \in \Omega = [0,1]$ and we try to find the solution $u:\Omega \to \mathbb{R}$. In particular, we want to solve the following equation:
\begin{equation}
\label{eq:poisson}
\begin{split}
  &\partial^2 _{x}u = f(x), \quad x\in\Omega,  \\
&u(0)  = u(1) = 0.
\end{split}
\end{equation}
As in \cite{wang2021eigenvector}, the function $f$ is constructed in such a way that the exact solution of \cref{eq:poisson} is given by:
\[
u(x) = \sin(2\pi x) + \frac{1}{10}\sin(50\pi x).
\]

This approach is done to evaluate the behavior of PINNs when the target solution presents a high frequency and a low frequency component.
We then train the PINN model by sampling $N_r = 10^3$ points in $\Omega$ with latin hypercube sampling.

\subsection{Wave Equation}
We opt to solve the wave equation below for each $(x, \tau) \in \Omega = [0,1]^2$ and aim to find the solution $u:\Omega \to \mathbb{R}$ . In particular, we aim to solve the following equation:
\begin{equation}
\label{eq:wave}
\begin{split}
  &\partial^2 _{\tau}u = -C^2 \partial^2 _{x} u, \quad \quad \quad \quad \quad \quad \quad \,\,(x,\tau)\in\Omega,  \\
    &u(x, 0) = \sin(\pi x) + \frac{1}{2} \sin (4\pi x),\quad x\in[0,1],\\
    &\partial_\tau u(x, 0) = 0 \quad \quad \quad \quad \quad \quad \quad \quad \quad
    x\in[0,1], \\
    &u(0, \tau) = u(1,\tau) = 0, \quad \quad \quad \quad \quad \, \, \, \tau \in [0,1].
\end{split}
\end{equation}
With $C$ being equal to 2 for our case. It is straightforward to obtain the correct solution of this equation through Fourier transform. In particular, the exact solution of \cref{eq:wave} is given by:
\[
u(x,\tau) = \sin(\pi x)\cos(2\pi \tau) + \frac{1}{2}\sin(4\pi x)\cos(8\pi \tau).
\]

We then train a PINN by sampling $N_r = 10^4$ training points in $\Omega$ for the PDE residuals with latin hypercube sampling, and $N_b = 3\cdot10^3$ points for training the model against the correct solution at $\partial\Omega$.

\subsection{Burgers' Equation}
Burgers' equation is a 1D version of Navier-Stokes equations. Its solution at high times present a discontinuity, which makes it challenging for spectrally biased architectures. The specific instance chosen in our numerics for Burgers' equation is the same as in \cite{raissi2018deep}. In particular, we refer to the exact same data provided by the authors. In particular, given $ (x,\tau) \in \Omega = [-1, 1]\times[0,1]$, we solve for $u:\Omega\to\mathbb{R}$ the following equation:
\begin{equation}
\label{eq:Burgers'_appendix}
\begin{split}
  &\partial_\tau u  + u\partial_x u - \nu\partial^2 _{x} u = 0, \quad (x,\tau)\in\Omega, \\
    &u(x, 0)  = -\sin(\pi x),\quad\quad\quad x\in[-1,1],\\
    &u(-1, \tau)  = u(1,\tau) = 0, \quad \,\,\,\, \tau \in [0,1],
\end{split}
\end{equation}
with the diffusivity $\nu$ being equal to $\frac{0.01}{\pi}$ for this specific instance. The correct solution is provided publicly by the authors of \cite{raissi2018deep}.

Training is performed with $N_r = 10^4$ collocation points for training the PDE residuals, sampled with latin hypercube sampling, and $N_b = 3\cdot10^3$ points for training the boundary and initial condition in $\partial\Omega$.

\subsection{Navier-Stokes Equation}
The most interesting scenario taken in consideration for our experiments is that of Navier-Stokes equations. In particular, we aim to solve the fluid flow in the wake of a cylinder in 2D tackled in \cite{2021NSFnets}<. In particular, we have $(x, y, t) \in \Omega = [2.5, 7.5]\times[-2.5, 2.5]\times[0, 16]$ and we wish to find $\Vec{u}:\Omega \to \mathbb{R}^3$ which is defined as $\Vec{u}(x,y,t) = [u(x,y,t), v(x,y,t), p(x,y,t)]^T$. In particular $u$ and $v$ are respectively the horizontal and vertical components of the fluid velocity and $p$ is the pressure at a point. Navier-Stokes equations are then expressed in vectorizer form  as follows:
\begin{equation}
\label{eq:NS}
\begin{split}
  &\partial_\tau u + u\partial_x u + v\partial_y u - \frac{1}{Re}\bigl(\partial^2 _{x} u +\partial^2 _{y} u \bigr) + \partial_x p   = 0, \quad (x, y,\tau)\in\Omega,  \\
  &\partial_\tau v  + u\partial_x v + v\partial_y v - \frac{1}{Re}\bigl(\partial^2 _{x} v +\partial^2 _{y} v \bigr) + \partial_y p   = 0, \quad (x, y,\tau)\in\Omega,  \\
   &\partial_x u  + \partial_y v = 0, \quad \quad \quad \,\,\,\,(x, y,\tau) 
   \in\Omega,  \\
    &u(x, y, 0)  = g_{u_0} (x, y),\quad (x,y)\in[2.5,7.5]\times[-2.5, 2.5],\\
    &v(x, y, 0)  = g_{v_0} (x, y),\quad (x,y)\in[2.5,7.5]\times[-2.5, 2.5],\\
    &u(2.5, y, \tau) = 1, \quad \quad \,\,\,\,(y, \tau) \in [-2.5, 2.5]\times[0,16],\\
    &v(2.5, y, \tau)  = 0, \quad \quad\,\,\,\,(y, \tau) \in [-2.5, 2.5]\times[0,16],
\end{split}
\end{equation}
where $Re$ represents the Reynolds' number, which is an adimensional quantity defined by the problem and is set to $100$ for our case. The initial conditions $(g_{u_0}, g_{v_0})$ can be found in the repository published by the authors of \cite{raissi2018hidden}, as well as the correct solution. The conditions at $x=-2.5$ represents the fluid velocity imposed at the inlet, and further conditions are given by the presence of a cylinder centered in $(x,y) = (0,0)$ with radius $0.25$. Furthermore, an additional condition appears at the borders, namely where $y = \pm 2.5$, where the no-slip condition can be chosen ($u=v=0$) or the correct solution can be given as boundary condition. Since the simulation provided in \cite{raissi2018hidden} refers to a free-flow stream, we use the correct solution at the boundaries.

To train our PINNs, we use $N_r = 5\cdot 10^5$ collocation points for training the PDE residuals, sampled with latin hypercube sampling, and $N_b = 2\cdot10^4$ points for training the boundary and initial condition in $\partial\Omega$. Morever, at every iteration, we minimize the loss on random batches of the training data, respectively $10^4$ points for the residuals and $5\cdot 10^3$ for boundary and initial condition.

\section{Further Numerical Experiments}\label{appG}
In this Appendix we present some additional numerical experiments. Notice that as a performance measure we utilize the $L^2$ relative loss, defined as follows
\begin{equation}\label{eq:l2_rel}
     \sum_{i=1}^N \frac{|u(x_i)- \hat{u}(x_i)|}{|u(x_i)|},
\end{equation}
where $u$ is the exact solution and $\hat{u}$ the approximated one.

In \cref{fig:Wave_loss}, we showcase the relative $L^2$ loss obtained on the test set during training on the Wave equation with the aforementioned optimizers. While Adam and L-BFGS get stuck relatively fast in a local minima, the LM algorithm is able to decrease the loss consistently, despite the complexity of the problem. The poor performance of L-BFGS can be motivated by two factors. On one hand, the Hessian computed during BFGS iterations is merely an approximation of the true Hessian; on the other hand, convergence to the true solution is heavily hindered since the initial guess is typically not close to the correct one.
\begin{figure}
\centering
\includegraphics[scale=0.4]{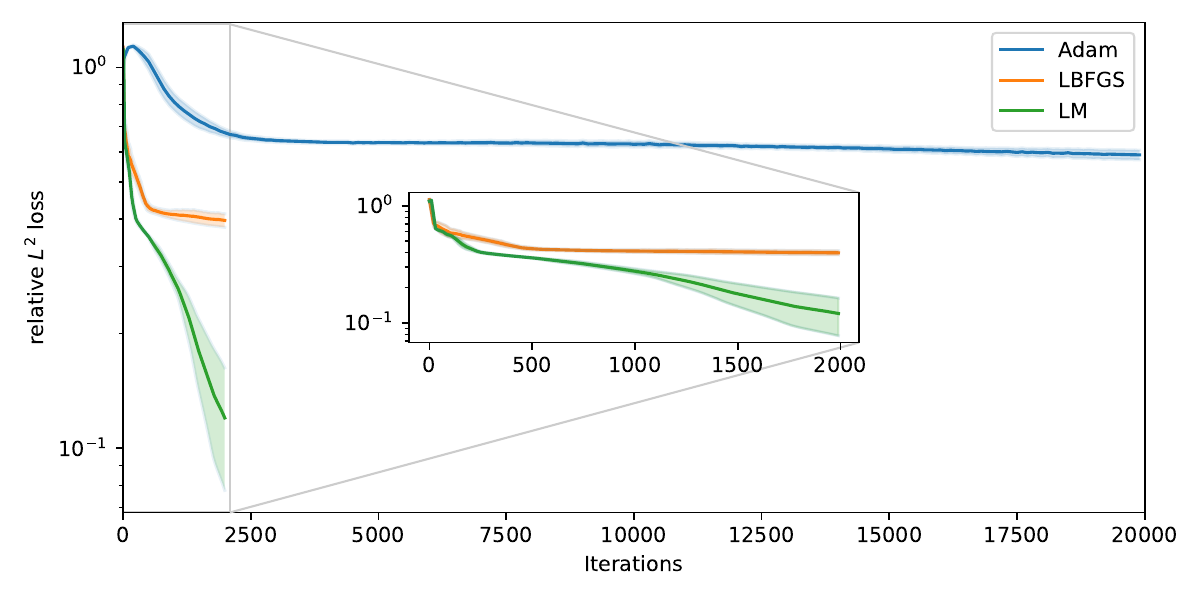}
\caption{Mean and standard deviation of the relative $L^2$
loss on the test set  on the Wave equation for Adam, L-BFGS and LM optimizer over iterations (repetition over $10$ independent runs).}
\label{fig:Wave_loss}
\end{figure}

In \cref{fig:Wave_sol} and \cref{fig:poisson_sol}, it is possible to notice the effect of the spectral bias: the PINN trained with Adam can capture only the lower frequency components of the true solution, while the model trained with LM performs better as the spectral bias is alleviated in accordance with \cref{thm:convergence_second_order}. 
It is worth noticing that the same holds even when introducing the loss balancing suggested in \cite{wang2022and}: its performance is showed in \cref{fig:poisson_sol}.

\begin{figure}
\centering
\includegraphics[scale=0.4]{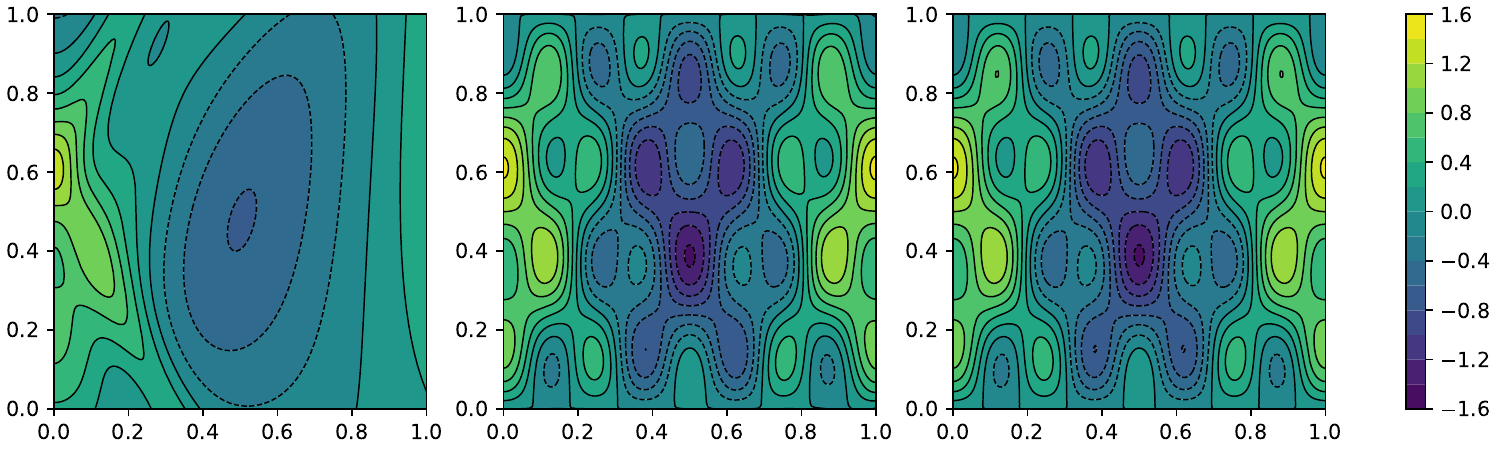}
\caption{Experiments on the Wave equation. Left: Prediction of the parametrized solution of a PINN trained with Adam (Left) and LM (Center) alongside with the true solution (Right).}
\label{fig:Wave_sol}
\end{figure}

\begin{figure}[!ht]
\centering
\includegraphics[scale=0.47]{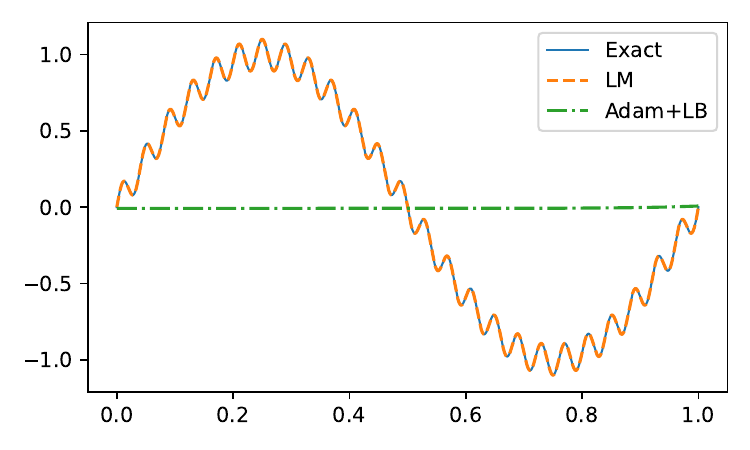}
\vspace{-0.3cm}
\caption{Experiments on the prediction of the solution of Poisson equation with LM and Adam (with loss balancing), both compared with the exact solution.}
\label{fig:poisson_sol}
\end{figure}

Finally, in \cref{fig:navier_stokes1}, we show that by employing the LM optimizer, it is possible to obtain a reasonable solution even for a PDE as complex as Navier-Stokes with relatively small architectures. Notice that the scale in the two plots are different.

\begin{figure}[!ht]
\centering
\includegraphics[scale=0.5]{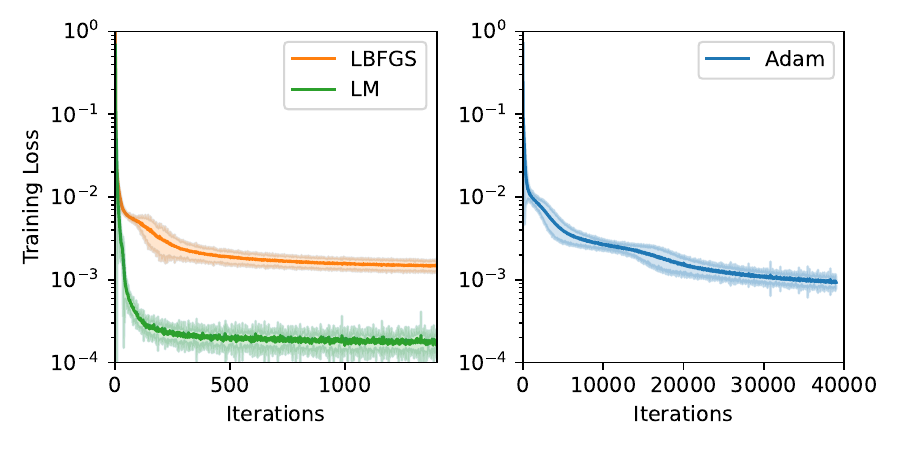}
\vspace{-0.4cm}
\caption{Mean and standard deviation of the training loss over the iterations for Adam, LBFGS and LM on Navier-Stokes equation (for $10$ independent runs).}
\label{fig:navier_stokes1}
\end{figure}

\newpage
\section*{NeurIPS Paper Checklist}
\begin{enumerate}

\item {\bf Claims}
    \item[] Question: Do the main claims made in the abstract and introduction accurately reflect the paper's contributions and scope?
    \item[] Answer: \answerYes{} 
    \item[] Justification: The Abstract and the Introduction clearly state all the claims and contributions made in the paper. This holds also for assumptions and limitations, which are shortly mentioned in the abstract and tackled more in depth in the Introduction, alongside related references.
    \item[] Guidelines:
    \begin{itemize}
        \item The answer NA means that the abstract and introduction do not include the claims made in the paper.
        \item The abstract and/or introduction should clearly state the claims made, including the contributions made in the paper and important assumptions and limitations. A No or NA answer to this question will not be perceived well by the reviewers. 
        \item The claims made should match theoretical and experimental results, and reflect how much the results can be expected to generalize to other settings. 
        \item It is fine to include aspirational goals as motivation as long as it is clear that these goals are not attained by the paper. 
    \end{itemize}

\item {\bf Limitations}
    \item[] Question: Does the paper discuss the limitations of the work performed by the authors?
    \item[] Answer: \answerYes{} 
    \item[] Justification: The practical limitations of the work are mainly connected to the scalability of the method, which is tackled in Section \ref{sec:limitations}. Additional limitations on the theoretical analysis are clearly mentioned throughout the paper, along with related research directions and references.
    \item[] Guidelines:
    \begin{itemize}
        \item The answer NA means that the paper has no limitation while the answer No means that the paper has limitations, but those are not discussed in the paper. 
        \item The authors are encouraged to create a separate "Limitations" section in their paper.
        \item The paper should point out any strong assumptions and how robust the results are to violations of these assumptions (e.g., independence assumptions, noiseless settings, model well-specification, asymptotic approximations only holding locally). The authors should reflect on how these assumptions might be violated in practice and what the implications would be.
        \item The authors should reflect on the scope of the claims made, e.g., if the approach was only tested on a few datasets or with a few runs. In general, empirical results often depend on implicit assumptions, which should be articulated.
        \item The authors should reflect on the factors that influence the performance of the approach. For example, a facial recognition algorithm may perform poorly when image resolution is low or images are taken in low lighting. Or a speech-to-text system might not be used reliably to provide closed captions for online lectures because it fails to handle technical jargon.
        \item The authors should discuss the computational efficiency of the proposed algorithms and how they scale with dataset size.
        \item If applicable, the authors should discuss possible limitations of their approach to address problems of privacy and fairness.
        \item While the authors might fear that complete honesty about limitations might be used by reviewers as grounds for rejection, a worse outcome might be that reviewers discover limitations that aren't acknowledged in the paper. The authors should use their best judgment and recognize that individual actions in favor of transparency play an important role in developing norms that preserve the integrity of the community. Reviewers will be specifically instructed to not penalize honesty concerning limitations.
    \end{itemize}

\item {\bf Theory Assumptions and Proofs}
    \item[] Question: For each theoretical result, does the paper provide the full set of assumptions and a complete (and correct) proof?
    \item[] Answer: \answerYes{} 
    \item[] Justification: All the theoretical results are accompanied with solid proofs which are included in the appendix, for the sake of brevity, and sketched in the manuscript, in order to provide an intuition to the reader. The assumptions made for each proof are also fully included (at times in the appendix).
    \item[] Guidelines:
    \begin{itemize}
        \item The answer NA means that the paper does not include theoretical results. 
        \item All the theorems, formulas, and proofs in the paper should be numbered and cross-referenced.
        \item All assumptions should be clearly stated or referenced in the statement of any theorems.
        \item The proofs can either appear in the main paper or the supplemental material, but if they appear in the supplemental material, the authors are encouraged to provide a short proof sketch to provide intuition. 
        \item Inversely, any informal proof provided in the core of the paper should be complemented by formal proofs provided in appendix or supplemental material.
        \item Theorems and Lemmas that the proof relies upon should be properly referenced. 
    \end{itemize}

    \item {\bf Experimental Result Reproducibility}
    \item[] Question: Does the paper fully disclose all the information needed to reproduce the main experimental results of the paper to the extent that it affects the main claims and/or conclusions of the paper (regardless of whether the code and data are provided or not)?
    \item[] Answer: \answerYes{} 
    \item[] Justification: The pseudocode of the main experimental results are given in the appendix. Moreover, the paper does mainly rely on existing algorithms and methods which are properly referenced across the paper. Furthermore, the majority of the methods referenced are also available in common Python packages.
    \item[] Guidelines:
    \begin{itemize}
        \item The answer NA means that the paper does not include experiments.
        \item If the paper includes experiments, a No answer to this question will not be perceived well by the reviewers: Making the paper reproducible is important, regardless of whether the code and data are provided or not.
        \item If the contribution is a dataset and/or model, the authors should describe the steps taken to make their results reproducible or verifiable. 
        \item Depending on the contribution, reproducibility can be accomplished in various ways. For example, if the contribution is a novel architecture, describing the architecture fully might suffice, or if the contribution is a specific model and empirical evaluation, it may be necessary to either make it possible for others to replicate the model with the same dataset, or provide access to the model. In general. releasing code and data is often one good way to accomplish this, but reproducibility can also be provided via detailed instructions for how to replicate the results, access to a hosted model (e.g., in the case of a large language model), releasing of a model checkpoint, or other means that are appropriate to the research performed.
        \item While NeurIPS does not require releasing code, the conference does require all submissions to provide some reasonable avenue for reproducibility, which may depend on the nature of the contribution. For example
        \begin{enumerate}
            \item If the contribution is primarily a new algorithm, the paper should make it clear how to reproduce that algorithm.
            \item If the contribution is primarily a new model architecture, the paper should describe the architecture clearly and fully.
            \item If the contribution is a new model (e.g., a large language model), then there should either be a way to access this model for reproducing the results or a way to reproduce the model (e.g., with an open-source dataset or instructions for how to construct the dataset).
            \item We recognize that reproducibility may be tricky in some cases, in which case authors are welcome to describe the particular way they provide for reproducibility. In the case of closed-source models, it may be that access to the model is limited in some way (e.g., to registered users), but it should be possible for other researchers to have some path to reproducing or verifying the results.
        \end{enumerate}
    \end{itemize}

\item {\bf Open access to data and code}
    \item[] Question: Does the paper provide open access to the data and code, with sufficient instructions to faithfully reproduce the main experimental results, as described in supplemental material?
    \item[] Answer: \answerYes{} 
    \item[] Justification: Making the code public is currently in discussion with the partner institutions. Due to legal reasons, it might not be possible to have it released as open source. However, despite our research not including any unconventional implementation, we make the code available per request to the corresponding author.
    \item[] Guidelines:
    \begin{itemize}
        \item The answer NA means that paper does not include experiments requiring code.
        \item Please see the NeurIPS code and data submission guidelines (\url{https://nips.cc/public/guides/CodeSubmissionPolicy}) for more details.
        \item While we encourage the release of code and data, we understand that this might not be possible, so “No” is an acceptable answer. Papers cannot be rejected simply for not including code, unless this is central to the contribution (e.g., for a new open-source benchmark).
        \item The instructions should contain the exact command and environment needed to run to reproduce the results. See the NeurIPS code and data submission guidelines (\url{https://nips.cc/public/guides/CodeSubmissionPolicy}) for more details.
        \item The authors should provide instructions on data access and preparation, including how to access the raw data, preprocessed data, intermediate data, and generated data, etc.
        \item The authors should provide scripts to reproduce all experimental results for the new proposed method and baselines. If only a subset of experiments are reproducible, they should state which ones are omitted from the script and why.
        \item At submission time, to preserve anonymity, the authors should release anonymized versions (if applicable).
        \item Providing as much information as possible in supplemental material (appended to the paper) is recommended, but including URLs to data and code is permitted.
    \end{itemize}

\item {\bf Experimental Setting/Details}
    \item[] Question: Does the paper specify all the training and test details (e.g., data splits, hyperparameters, how they were chosen, type of optimizer, etc.) necessary to understand the results?
    \item[] Answer: \answerYes{} 
    \item[] Justification: Details of the training and testing proceedures of all the experimental results obtained in the paper are shortly provided in the paper and thoroughly discussed in the appendix
    \item[] Guidelines:
    \begin{itemize}
        \item The answer NA means that the paper does not include experiments.
        \item The experimental setting should be presented in the core of the paper to a level of detail that is necessary to appreciate the results and make sense of them.
        \item The full details can be provided either with the code, in appendix, or as supplemental material.
    \end{itemize}

\item {\bf Experiment Statistical Significance}
    \item[] Question: Does the paper report error bars suitably and correctly defined or other appropriate information about the statistical significance of the experiments?
    \item[] Answer: \answerYes{} 
    \item[] Justification: All the tests which include variability (such as initialization of the networks) are obtained for several runs, and are showcased alongside the variability obtained during training.
    \item[] Guidelines:
    \begin{itemize}
        \item The answer NA means that the paper does not include experiments.
        \item The authors should answer "Yes" if the results are accompanied by error bars, confidence intervals, or statistical significance tests, at least for the experiments that support the main claims of the paper.
        \item The factors of variability that the error bars are capturing should be clearly stated (for example, train/test split, initialization, random drawing of some parameter, or overall run with given experimental conditions).
        \item The method for calculating the error bars should be explained (closed form formula, call to a library function, bootstrap, etc.)
        \item The assumptions made should be given (e.g., Normally distributed errors).
        \item It should be clear whether the error bar is the standard deviation or the standard error of the mean.
        \item It is OK to report 1-sigma error bars, but one should state it. The authors should preferably report a 2-sigma error bar than state that they have a 96\% CI, if the hypothesis of Normality of errors is not verified.
        \item For asymmetric distributions, the authors should be careful not to show in tables or figures symmetric error bars that would yield results that are out of range (e.g. negative error rates).
        \item If error bars are reported in tables or plots, The authors should explain in the text how they were calculated and reference the corresponding figures or tables in the text.
    \end{itemize}

\item {\bf Experiments Compute Resources}
    \item[] Question: For each experiment, does the paper provide sufficient information on the computer resources (type of compute workers, memory, time of execution) needed to reproduce the experiments?
    \item[] Answer: \answerYes{} 
    \item[] Justification: The experimental set up used to obtain the numerical results provided in the paper is fully described.
    \item[] Guidelines:
    \begin{itemize}
        \item The answer NA means that the paper does not include experiments.
        \item The paper should indicate the type of compute workers CPU or GPU, internal cluster, or cloud provider, including relevant memory and storage.
        \item The paper should provide the amount of compute required for each of the individual experimental runs as well as estimate the total compute. 
        \item The paper should disclose whether the full research project required more compute than the experiments reported in the paper (e.g., preliminary or failed experiments that didn't make it into the paper). 
    \end{itemize}
    
\item {\bf Code Of Ethics}
    \item[] Question: Does the research conducted in the paper conform, in every respect, with the NeurIPS Code of Ethics \url{https://neurips.cc/public/EthicsGuidelines}?
    \item[] Answer: \answerYes{} 
    \item[] Justification: All the authors have reviewed the NeurIPS Code of Ethics.
    \item[] Guidelines:
    \begin{itemize}
        \item The answer NA means that the authors have not reviewed the NeurIPS Code of Ethics.
        \item If the authors answer No, they should explain the special circumstances that require a deviation from the Code of Ethics.
        \item The authors should make sure to preserve anonymity (e.g., if there is a special consideration due to laws or regulations in their jurisdiction).
    \end{itemize}

\item {\bf Broader Impacts}
    \item[] Question: Does the paper discuss both potential positive societal impacts and negative societal impacts of the work performed?
    \item[] Answer: \answerYes{} 
    \item[] Justification: The Conclusion delve on the potential broader impact of our work for future research direction.
    \item[] Guidelines:
    \begin{itemize}
        \item The answer NA means that there is no societal impact of the work performed.
        \item If the authors answer NA or No, they should explain why their work has no societal impact or why the paper does not address societal impact.
        \item Examples of negative societal impacts include potential malicious or unintended uses (e.g., disinformation, generating fake profiles, surveillance), fairness considerations (e.g., deployment of technologies that could make decisions that unfairly impact specific groups), privacy considerations, and security considerations.
        \item The conference expects that many papers will be foundational research and not tied to particular applications, let alone deployments. However, if there is a direct path to any negative applications, the authors should point it out. For example, it is legitimate to point out that an improvement in the quality of generative models could be used to generate deepfakes for disinformation. On the other hand, it is not needed to point out that a generic algorithm for optimizing neural networks could enable people to train models that generate Deepfakes faster.
        \item The authors should consider possible harms that could arise when the technology is being used as intended and functioning correctly, harms that could arise when the technology is being used as intended but gives incorrect results, and harms following from (intentional or unintentional) misuse of the technology.
        \item If there are negative societal impacts, the authors could also discuss possible mitigation strategies (e.g., gated release of models, providing defenses in addition to attacks, mechanisms for monitoring misuse, mechanisms to monitor how a system learns from feedback over time, improving the efficiency and accessibility of ML).
    \end{itemize}
    
\item {\bf Safeguards}
    \item[] Question: Does the paper describe safeguards that have been put in place for responsible release of data or models that have a high risk for misuse (e.g., pretrained language models, image generators, or scraped datasets)?
    \item[] Answer: \answerNA{} 
    \item[] Justification: \answerNA{}
    \item[] Guidelines:
    \begin{itemize}
        \item The answer NA means that the paper poses no such risks.
        \item Released models that have a high risk for misuse or dual-use should be released with necessary safeguards to allow for controlled use of the model, for example by requiring that users adhere to usage guidelines or restrictions to access the model or implementing safety filters. 
        \item Datasets that have been scraped from the Internet could pose safety risks. The authors should describe how they avoided releasing unsafe images.
        \item We recognize that providing effective safeguards is challenging, and many papers do not require this, but we encourage authors to take this into account and make a best faith effort.
    \end{itemize}

\item {\bf Licenses for existing assets}
    \item[] Question: Are the creators or original owners of assets (e.g., code, data, models), used in the paper, properly credited and are the license and terms of use explicitly mentioned and properly respected?
    \item[] Answer: \answerYes{} 
    \item[] Justification: All the scientific outcome of this paper was generated by the authors. Methods and algorithms fro third parties are properly referenced across the paper.
    \item[] Guidelines:
    \begin{itemize}
        \item The answer NA means that the paper does not use existing assets.
        \item The authors should cite the original paper that produced the code package or dataset.
        \item The authors should state which version of the asset is used and, if possible, include a URL.
        \item The name of the license (e.g., CC-BY 4.0) should be included for each asset.
        \item For scraped data from a particular source (e.g., website), the copyright and terms of service of that source should be provided.
        \item If assets are released, the license, copyright information, and terms of use in the package should be provided. For popular datasets, \url{paperswithcode.com/datasets} has curated licenses for some datasets. Their licensing guide can help determine the license of a dataset.
        \item For existing datasets that are re-packaged, both the original license and the license of the derived asset (if it has changed) should be provided.
        \item If this information is not available online, the authors are encouraged to reach out to the asset's creators.
    \end{itemize}

\item {\bf New Assets}
    \item[] Question: Are new assets introduced in the paper well documented and is the documentation provided alongside the assets?
    \item[] Answer: \answerNA{} 
    \item[] Justification: \answerNA{}
    \item[] Guidelines:
    \begin{itemize}
        \item The answer NA means that the paper does not release new assets.
        \item Researchers should communicate the details of the dataset/code/model as part of their submissions via structured templates. This includes details about training, license, limitations, etc. 
        \item The paper should discuss whether and how consent was obtained from people whose asset is used.
        \item At submission time, remember to anonymize your assets (if applicable). You can either create an anonymized URL or include an anonymized zip file.
    \end{itemize}

\item {\bf Crowdsourcing and Research with Human Subjects}
    \item[] Question: For crowdsourcing experiments and research with human subjects, does the paper include the full text of instructions given to participants and screenshots, if applicable, as well as details about compensation (if any)? 
    \item[] Answer: \answerNA{} 
    \item[] Justification: \answerNA{}
    \item[] Guidelines:
    \begin{itemize}
        \item The answer NA means that the paper does not involve crowdsourcing nor research with human subjects.
        \item Including this information in the supplemental material is fine, but if the main contribution of the paper involves human subjects, then as much detail as possible should be included in the main paper. 
        \item According to the NeurIPS Code of Ethics, workers involved in data collection, curation, or other labor should be paid at least the minimum wage in the country of the data collector. 
    \end{itemize}

\item {\bf Institutional Review Board (IRB) Approvals or Equivalent for Research with Human Subjects}
    \item[] Question: Does the paper describe potential risks incurred by study participants, whether such risks were disclosed to the subjects, and whether Institutional Review Board (IRB) approvals (or an equivalent approval/review based on the requirements of your country or institution) were obtained?
    \item[] Answer: \answerNA{} 
    \item[] Justification: \answerNA{}
    \item[] Guidelines:
    \begin{itemize}
        \item The answer NA means that the paper does not involve crowdsourcing nor research with human subjects.
        \item Depending on the country in which research is conducted, IRB approval (or equivalent) may be required for any human subjects research. If you obtained IRB approval, you should clearly state this in the paper. 
        \item We recognize that the procedures for this may vary significantly between institutions and locations, and we expect authors to adhere to the NeurIPS Code of Ethics and the guidelines for their institution. 
        \item For initial submissions, do not include any information that would break anonymity (if applicable), such as the institution conducting the review.
    \end{itemize}

\end{enumerate}

\end{document}